\newcommand{\Vc}[1]{\bm{#1}}         
\newcommand{\Mat}[1]{\bm{#1}}           
\newcommand{\T}{{\top}}                 
\newcommand{\cT}{{*}}                   
\newcommand{\pinv}{{\dagger}}           
\newcommand{\dataset}[1]{\mathcal{#1}}  
\newcommand{\SFN}[1]{{\lVert{#1}\rVert_\mathrm{F}^2}}   
\newcommand{\onehot}[1]{{\operatorname{onehot}(#1)}}    
\newcommand{\argmin}[1]{\underset{#1}{\operatorname{argmin\ }}} 
\newtheorem{lemma}{Lemma}
\newtheorem{theorem}{Theorem}
\definecolor{cvprblue}{rgb}{0.21,0.49,0.74}
\title{AFL: A Single-Round Analytic Approach for Federated Learning with Pre-trained Models}
\author{\textbf{Run He}$^{1}$, \textbf{Kai Tong}$^{1}$, \textbf{Di Fang}$^{1}$, \textbf{Han Sun}$^{2,3}$, \textbf{Ziqian Zeng}$^{1}$,  \textbf{Haoran Li}$^{4}$, \\ \textbf{Tianyi Chen}$^{5}$, \textbf{Huiping Zhuang}$^{1*}$ \\
$^{1}$ South China University of Technology \quad
$^{2}$ Tsinghua University\\
$^{3}$ Beijing National Research Center for Information Science and Technology \\
$^{4}$ The Hong Kong University of Science and Technology \quad
$^{5}$ Microsoft\\
{$^{*}$\tt\small corresponding: hpzhuang@scut.edu.cn}
}
\begin{document}
\maketitle
\begin{abstract}
    In this paper, we introduce analytic federated learning (AFL), a new training paradigm that brings analytical (i.e., closed-form) solutions to the federated learning (FL) with pre-trained models. Our AFL draws inspiration from analytic learning---a gradient-free technique that trains neural networks with analytical solutions in one epoch. In the local client training stage, the AFL facilitates a one-epoch training, eliminating the necessity for multi-epoch updates. In the aggregation stage, we derive an absolute aggregation (AA) law. This AA law allows a single-round aggregation, reducing heavy communication overhead and achieving fast convergence by removing the need for multiple aggregation rounds. More importantly, the AFL exhibits a property that \textit{invariance to data partitioning}, meaning that regardless of how the full dataset is distributed among clients, the aggregated result remains identical. This could spawn various potentials, such as data heterogeneity invariance and client-number invariance. We conduct experiments across various FL settings including extremely non-IID ones, and scenarios with a large number of clients (e.g., $\ge 1000$). In all these settings, our AFL constantly performs competitively while existing FL techniques encounter various obstacles. Our codes are available at \url{https://github.com/ZHUANGHP/Analytic-federated-learning}.
\end{abstract}    
\section{Introduction}
Federated learning (FL) \cite{FedAvg2017ICML} aims to collectively train a machine learning model over data silos by aggregating their individual trained weights, while preserving the privacy of their source data. This training paradigm has received high popularity, particularly in sensitive domains where data privacy is crucial, such as in banks \cite{FedLoan2020ICDMW, FFD2019} and hospitals \cite{FedMed12020NMI, FedMed22020IEEESensor}.

Conventional FL techniques rely on weight aggregation among clients over multiple rounds of training. The objective is to achieve convergence and approximate its joint-training counterpart, where all clients' data are accessible in a single location. To accomplish this, many contributions have been made. One widely recognized method is FedAvg \cite{FedAvg2017ICML}. Relying on a large number of aggregation rounds, the FedAvg employs a simple yet effective weight averaging technique across local clients. Building upon this, various methods have been proposed (e.g., the FedProx \cite{FedProx2020MLSYS} and the FedNova \cite{fednova2020nips}), each with its own specific focus within the field of FL.

However, training a model from scratch via FL can be computationally intensive and demanding in terms of communication bandwidth, especially with large models and numerous participating clients. Several efforts have explored utilizing pre-trained models to mitigate these challenges \cite{FedPEFT2023CoLLAs,FedPETuning2023ACLfindings}. Typically, this involves freezing the backbone and only updating and sharing lightweight parameters, such as prototypes \cite{FedProto2022AAAI, FedPCL2022NeurIPS} or prompts \cite{FedPR2023CVPR, pFedPG2023ICCV}, to reduce the substantial training costs.
 
Although leveraging pre-trained models can circumvent the high costs associated with training backbones from scratch, existing FL techniques with pre-trained models are primarily based on a gradient-based iterative approach, necessitating iterative optimization on each client and multi-round aggregation across clients. The gradient-based optimization used in the existing FL faces various challenges and imposes several constraints. The faced challenges include, but are not limited to: 1) \textit{Data heterogeneity}, where the data distribution in each client is not independently identical (non-IID), even with mutually exclusive data categories across different clients (i.e., pathological distribution), 2) \textit{Large client number}, where the aggregation involving a significant number of clients (i.e., $\ge1000$) can lead to substantial performance degradation in FL systems as the client count increases \cite{FedNIID_survey2021ICDE}, 3) \textit{Slow convergence:} where FL methods may struggle to converge within limited communication rounds, especially in severe non-IID scenarios, and 4) \textit{High communication cost}, where multi-round aggregation in existing FL methods escalates the communication costs associated with parameter sharing between clients and servers. 


In this paper, we propose a new FL training framework named analytic federated learning (AFL), which provides a single-round aggregation for federated learning with pre-trained models. The AFL draws inspiration from analytic learning \cite{karnet2018, pil2001, brmp2021}---a gradient-free technique with a closed-form solution obtained from reshaping the network training into linearized formulation. The AL paradigm receives several benefits over gradient-based techniques. First, it is gradient-free, thereby avoiding gradient-related issues, such as vanishing and exploding gradients. Second, the analytical solution frees AL from convergence issues during training. Also, the AL requires only one visit to the dataset while gradient-based mechanism usually needs hundreds of epochs or beyond. These properties are attractive in FL to accomplish fast convergence and low communication cost.
Here, we are able to incorporate this mechanism into the FL domain to overcome limitations inherent in gradient-based techniques. Our contributions are summarized as follows:


\begin{itemize}
    \item We propose the AFL, a gradient-free FL framework with analytical (closed-form) solutions. These analytical solutions apply both in the local client training stage and the aggregation stage.
    \item In the local stage, we adopt a pre-trained network to harness input embeddings, and formulate the training in each client into a localized linear regression problem. This leads to a least squares (LS) based \textit{one-epoch client training}, eliminating the need for multi-epoch training and enabling fast convergence in local clients.
    \item In the aggregation stage, we derive an absolute aggregation (AA) law in analytical form, optimally establishing a \textit{single-round aggregation}. That is, the aggregation happens only once, avoiding multiple FL rounds that bring high communication costs. Additionally, in scenarios where the AA law becomes suboptimal due to a large number of clients, we introduce a regularization intermediary (RI) process to restore its optimality.
    \item Owing to analytical solutions, the AFL exhibits a property that \textit{invariance to data partitioning}. This means that regardless of how the full dataset is distributed (e.g., non-IID) among local clients, the result remains identical. This property spawns several appealing characteristics: i) \textit{Data heterogeneity invariance} where the result is invariant to arbitrary heterogeneous data partition scenarios. ii) \textit{Client-number invariance}, which produces identical results regardless of the number of clients involved. 
    \item We conduct extensive experiments spanning diverse scenarios, including a wide variety of non-IID partitions and large client number (up to 1000) settings. Our AFL consistently showcases competitive performance throughout all these settings when compared with other methods.
\end{itemize}

\section{Related Works}
In this section, we review existing related FL literature. Additionally, we explore various AL techniques and their variants to reveal their underlying mechanisms.


\subsection{Federated Learning Methods}

Following the FedAvg \cite{FedAvg2017ICML}, to address non-IID issues in FL, various methods have been proposed. One common approach involves assessing the significance of parameters during aggregation to ensure that local updates do not diverge substantially from the global model. For instance, the FedProx \cite{FedProx2020MLSYS} restricts the size of local updates, while the FedNova \cite{fednova2020nips} employs a normalized averaging method to eliminate target inconsistency while maintaining fast error convergence. These methods are frequently used as baselines, and we compare our results against them in our experiments. Another set of methods focuses on determining adaptive aggregation weights obtained from multiple clients. The FedLAW \cite{FedLAW2023ICML} learns these weights to achieve a global model with state-of-the-art performance, though it requires a proxy dataset to learn the weights, making the results sensitive to the selection of the proxy dataset. To address this sensitivity, the FedCDA \cite{FedCDA2024ICLR} proposes a proxy-free method that reduces each client's deviation from the local models of other participants and selects a local model from its multiple recent models acquired over several rounds.

Some methods address the parameter order mismatch issue across clients, which can occur during global aggregation. The Fed2 \cite{yu2021fed2} designs a model structure adaptation method to ensure explicit feature allocation across different network structures. Similarly, the method in \cite{fedneuron_2022_CVPR} seeks a position-aware neuron to fuse position-related values (i.e., position encodings) into neuron outputs. Distillation methods \cite{Guo_2020_CVPR,NEURIPS2020_distributedistillation,Wu_Gong_2021aaai,Wang_2023_CVPR} represent another branch, where the average of logits from client models is used for the local model aggregation, thereby enhancing generalization. \cite{NEURIPS2020_lin} pioneers to apply knowledge distillation on the server side, transferring knowledge from multiple local models to the global model using an unlabeled proxy dataset. To overcome the limitation of using a proxy dataset, recent studies such as \cite{pmlr-zhu21b} and \cite{Zhang_2022_CVPR} suggest substituting the proxy dataset with generated data. 

Existing FL techniques have several significant drawbacks, including challenges with data heterogeneity, large client numbers, convergence issues and high communication costs. Our AFL framework addresses these issues by utilizing a gradient-free, closed-form analytic learning approach, avoiding gradient-related problems (e.g., multi-epoch training, convergence issues and multi-round aggregation).

\subsection{Analytic Learning}
The AL has been developed as a strategy to address issues associated with gradient-based update, such as gradient vanishing/exploding, divergence during iteration, and long training time due to multi-epoch training. The AL is also referred to as pseudoinverse learning \cite{pil2001} owing to its utilization of matrix inversion. The AL starts from shallow learning, which is investigated prior to the advent of deep networks in the realm of research. For instance, the radial basis network \cite{rbf1991univ} trains parameters using an LS estimation after performing a kernel transformation in the first layer. The multilayer AL \cite{2018icisToh,analytic-learning-finallayer2019noniterative} comes up with a one-epoch training style, using LS techniques to resolve linear segments transformed by nonlinear network. One instance of this method is the dense pseudoinverse autoencoder \cite{DensePILAE2021journal}, which uses LS solutions to combine shallow and deep features to train a stacked autoencoder layer-by-layer.

Nonetheless, earlier AL techniques train their weights by processing the entire dataset simultaneously, therefore facing memory challenge. This memory concern is alleviated by the block-wise recursive Moore-Penrose inverse \cite{brmp2021}, which equivalently replaces the joint learning with a recursive approach. This recursive equivalent property echoes well with the continual learning community. Naturally, analytic continual learning techniques \cite{ACIL2022NeurIPS,GKEAL2023CVPR2023,DSAL2024AAAI} adopt this equivalent characteristic, thrive in handling the catastrophic forgetting problem, and are invariant to the sequential data partition in continual learning. Our AFL draws inspiration from these adaptations, aiming to introduce similar equivalent patterns (e.g., invariant to heterogeneous data) to the FL community.
\begin{figure*}[t]
        \centering
        \includegraphics[width=\linewidth]{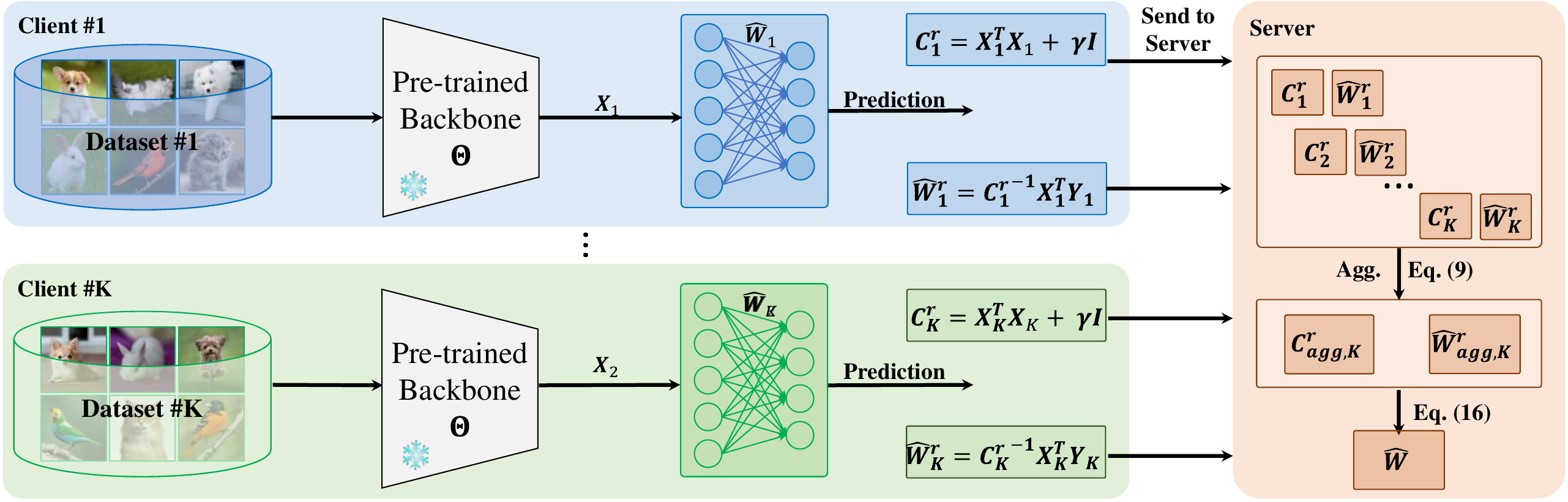}
        \caption{An overview of the AFL. During the local stage, each client calculates $\Mat{C}_{k}^{\text{r}}$ and $\Mat{\hat W}_{k}^{\text{r}}$ based on the same pre-trained backbone and its own dataset. The server obtained the $\Mat{C}_{\text{agg},K}^{\text{r}}$ and $\Mat{\hat W}_{\text{agg},K}^{\text{r}}$ then get $\Mat{\hat W}$ in the aggregation stage.}\label{fig:mainflow}
        \vskip -0.1in
    \end{figure*}

\section{Analytic Federated Learning}
In this section, we provide a detailed exposition of AFL derivations, organized into a local training stage and a centralized aggregation stage. In the local stage, a pre-trained backbone serves as a feature extractor, facilitating an AL network learning that allows the training to be completed in one epoch. In the aggregation stage, we introduce the AA law, establishing a single-round aggregation. We elaborate on AFL's invariance to data partitioning here, bringing benefits such as data heterogeneity invariance, client-number invariance and fast convergence in a single round. An Overview of the proposed AFL paradigm is depicted in Figure \ref{fig:mainflow}.

Prior to further developments, here let $\dataset{D} = \{\dataset{D}_{k}\}_{k=1}^{K}$ be the complete training data, where $\dataset{D}_{k} \sim \{\mathcal{X}_{k,i},y_{k,i}\}_{i=1}^{N_{k}}$ suggests an $N_{k}$-sample  sub-dataset accessible to the $k$-th client with $\mathcal{X}_{k,i}$ and $y_{k,i}$ representing the $i$-th input-label pair. In this paper, all these $K$ clients share the same backbone network $f_{\text{backbone}}$ parameterized by $\bm{\Theta}$ to map their inputs (e.g., $\mathcal{X}$) to embedding vectors.

\subsection{Local Stage: Localized Analytic Learning}
In this stage, each local client's network is trained using the AL technique. This involves transforming the neural network's classification head into a linear regression problem, thereby enabling the derivation of a closed-form LS solution.

At the initial step, client $k$ extracts its embedding vector $\Vc{x}_{k,i}$ by passing the $i$-th data $\mathcal{X}_{k,i}$ from $\dataset{D}_{k}$ through the frozen backbone network $f_{\text{backbone}}$, i.e., 
\begin{equation}
 \Vc{x}_{k,j} = f_{\text{backbone}}(\mathcal{X}_{k,j}, \bm{\Theta}) 
\end{equation}
where $ \Vc{x}_{k,j} \in\mathbb{R}^{1\times y_{\text{e}}}$, with $y_{\text{e}}$ indicating the embedding length. 

\par For the $k$-th client (with $N_{k}$ samples in $\dataset{D}_{k}$), we can stack the extracted embeddings and their corresponding one-hoted labels via mapping $\dataset{D}_{k} \sim \{\mathcal{X}_{k,i},y_{k,i}\}_{i=1}^{N_{k}}$ to $\dataset{\bar D}_{k} \sim \{\Mat{X}_{k}, \Mat{Y}_{k}\}$, i.e., 
\begin{equation}\label{eq_emb_label}
	\resizebox{0.43\textwidth}{!}{$\Mat{X}_{k} = \begin{bmatrix}
		\Vc{x}_{k,1} \\
		\Vc{x}_{k,2} \\
		\vdots \\
		\Vc{x}_{k,N_k}
	\end{bmatrix} = \begin{bmatrix}
		f_{\text{backbone}}(\mathcal{X}_{k,1}, \bm{\Theta}) \\
		f_{\text{backbone}}(\mathcal{X}_{k,2}, \bm{\Theta}) \\
		\vdots \\
		f_{\text{backbone}}(\mathcal{X}_{k,N_k}, \bm{\Theta})
	\end{bmatrix}
	\Mat{Y}_{k} = \begin{bmatrix}
		\onehot{y_{k,1}} \\
		\onehot{y_{k,2}} \\
		\vdots       \\
		\onehot{y_{k,N_k}}
	\end{bmatrix},$}
\end{equation}
where the embedding matrix $\Mat{X}_{k} \in \mathbb{R}^{N_{k} \times y_{\text{e}}}$,  and the label matrix $\Mat{Y}_{k} \in \mathbb{R}^{N_{k} \times C}$ has $C$ classes. The $\onehot{*}$ operator converts the index label  $y_{k,j}$ into a $C$-dimension one-hoted row vector. 

\par Subsequently, we approach the local client training with AL technique \cite{pil2001}. Specially, the target of the $k$-th client is to linearly map the extracted embeddings onto the one-hoted labels by minimizing the mean square error (MSE) loss function as follows.
\begin{equation}\label{eq_mes}
	\mathcal{L}(\Mat{W}_{k}) = \SFN{\Mat{Y}_{k} - \Mat{X}_{k}\Mat{W}_k},
\end{equation}
where $\lVert{*}\rVert_\mathrm{F}$ indicates the Frobenius norm. This leads to an optimal weight estimation $\Mat{\hat{W}}_{k}$, i.e.,
\begin{equation}\label{eq_w_k}
	\Mat{\hat{W}}_{k} = \argmin{\Mat{W}_{k}} \mathcal{L}(\Mat{W}_{k}) = \Mat{X}_{k}^{\pinv}\Mat{Y}_{k},
\end{equation}
where $\pinv$ denotes the Moore-Penrose (MP) inverse (also referred as generalized inverse or pseudoinverse) \cite{pil2001,brmp2021}.

The solution presented in \eqref{eq_w_k} optimally addresses the MSE loss function described in \eqref{eq_mes}, effectively establishing an LS-based AL solution for localized network learning.

\textbf{Why One-epoch Analytic Learning Works.} AL methods are generally effective for training shallow networks but face challenges when applied to deeper ones. This can be attributed to the fact that AL techniques are often designed as classifiers rather than end-to-end learning approaches. Despite this limitation, recent research has demonstrated that with a well-trained backbone, the AL performs adequately in various complex scenarios \cite{GACL_Zhuang_NeurIPS2024}. The practice of using a ``pre-trained backbone + downstream tasks'' has become increasingly common. This has allowed the one-epoch AL to thrive in various areas such as continual learning \cite{ACIL2022NeurIPS} and reinforcement learning \cite{alrl2024locality}. Hence, it could also be well incorporated in the individual client training.

Adopting AL is the key to enforcing the upcoming single-round aggregation (by deriving the AA law). The affine characteristic of linear regression in each client opens up new possibilities for exploration in FL. We provide a comprehensive explanation of such an exploration in later sections.

\subsection{Aggregation Stage: Absolute Aggregation Law}
In the aggregation stage, we introduce the Absolute Aggregation (AA) law, a key contribution in AFL. The AA law facilitates a single-round aggregation, i.e., the aggregation happens only once. Additionally, in scenarios where the AA law becomes suboptimal due to a large number of clients, we introduce a regularization intermediary (RI) process to restore its optimality.

The MP inverse partition \cite{Cline_GIPM_JSIAM1964} inspires our derivation, which is reformulated into Lemma \ref{corollary_decomp}.

\begin{lemma}\label{corollary_decomp}
	\par Let $\Mat{X} = \begin{bmatrix} \Mat{X}_{u} \\ \Mat{X}_{v} \end{bmatrix}$ with $\Mat{X}_{u}$ and $\Mat{X}_{v}$ having full column ranks, and $\Mat{X}$ follows a partition
	\begin{equation}
		\Mat{X}^{\pinv} = \begin{bmatrix} \Mat{\bar U} & \Mat{\bar V} \end{bmatrix},
	\end{equation}
	where
	\begin{equation*}\nonumber
 \resizebox{0.45\textwidth}{!}{$
		\begin{cases}
			\Mat{\bar U} = \Mat{X}_{u}^{\pinv} - \Mat{R}_{u}\Mat{C}_{v}\Mat{X}_{u}^{\pinv} + \Mat{R}_{u}\Mat{C}_{v}(\Mat{C}_{u} + \Mat{C}_{v})^{-1}\Mat{C}_{v}\Mat{X}_{u}^{\pinv} \\
			\Mat{\bar V} = \Mat{X}_{v}^{\pinv} - \Mat{R}_{v}\Mat{C}_{u}\Mat{X}_{v}^{\pinv} + \Mat{R}_{v}\Mat{C}_{u}(\Mat{C}_{u} + \Mat{C}_{v})^{-1}\Mat{C}_{u}\Mat{X}_{v}^{\pinv}
		\end{cases},$}
  \end{equation*}
  \begin{equation}
            \begin{cases}
			\Mat{C}_{u} = \Mat{X}_{u}^{\T}\Mat{X}_{u} \\
			\Mat{C}_{v} = \Mat{X}_{v}^{\T}\Mat{X}_{v}
		\end{cases}, \quad
            \begin{cases}
			\Mat{R}_{u} = \Mat{C}_{u}^{-1} \\
			\Mat{R}_{v} = \Mat{C}_{v}^{-1}
		\end{cases}.
	\end{equation}
\end{lemma}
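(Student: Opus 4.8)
The plan is to exploit the full-column-rank hypothesis to reduce the pseudoinverse to its explicit normal-equation form, and then verify the two stated block expressions by a single Woodbury-type algebraic identity. First I would observe that since $\Mat{X}_{u}$ and $\Mat{X}_{v}$ each have full column rank, so does their vertical stack $\Mat{X}$ (if $\Mat{X}\Vc{z}=\Vc{0}$ then $\Mat{X}_{u}\Vc{z}=\Vc{0}$, forcing $\Vc{z}=\Vc{0}$). Consequently $\Mat{C}_{u}$, $\Mat{C}_{v}$, and $\Mat{C}_{u}+\Mat{C}_{v}=\Mat{X}^{\T}\Mat{X}$ are all invertible, and $\Mat{X}^{\pinv}=(\Mat{X}^{\T}\Mat{X})^{-1}\Mat{X}^{\T}$. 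Writing $\Mat{X}^{\T}=\begin{bmatrix}\Mat{X}_{u}^{\T} & \Mat{X}_{v}^{\T}\end{bmatrix}$ and reading off the block columns gives the two targets I must reach: $\Mat{\bar U}=(\Mat{C}_{u}+\Mat{C}_{v})^{-1}\Mat{X}_{u}^{\T}$ and $\Mat{\bar V}=(\Mat{C}_{u}+\Mat{C}_{v})^{-1}\Mat{X}_{v}^{\T}$. This reduces the lemma to a purely algebraic claim about the coefficient matrices $\Mat{R}_{u}=\Mat{C}_{u}^{-1}$ and $\Mat{R}_{v}=\Mat{C}_{v}^{-1}$.

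Next, using full column rank once more to write $\Mat{X}_{u}^{\pinv}=\Mat{R}_{u}\Mat{X}_{u}^{\T}$, I would substitute into the stated expression for $\Mat{\bar U}$ and factor $\Mat{X}_{u}^{\T}$ out on the right, leaving the matrix identity $\Mat{R}_{u}-\Mat{R}_{u}\Mat{C}_{v}\Mat{R}_{u}+\Mat{R}_{u}\Mat{C}_{v}(\Mat{C}_{u}+\Mat{C}_{v})^{-1}\Mat{C}_{v}\Mat{R}_{u}=(\Mat{C}_{u}+\Mat{C}_{v})^{-1}$ to be proved. Grouping the last two terms as $\Mat{R}_{u}\Mat{C}_{v}\bigl[(\Mat{C}_{u}+\Mat{C}_{v})^{-1}\Mat{C}_{v}-\Mat{I}\bigr]\Mat{R}_{u}$ and applying $(\Mat{C}_{u}+\Mat{C}_{v})^{-1}\Mat{C}_{v}-\Mat{I}=-(\Mat{C}_{u}+\Mat{C}_{v})^{-1}\Mat{C}_{u}$ together with $\Mat{C}_{u}\Mat{R}_{u}=\Mat{I}$, the whole left side collapses to $\Mat{R}_{u}-\Mat{R}_{u}\Mat{C}_{v}(\Mat{C}_{u}+\Mat{C}_{v})^{-1}=\Mat{R}_{u}\bigl[\Mat{I}-\Mat{C}_{v}(\Mat{C}_{u}+\Mat{C}_{v})^{-1}\bigr]=\Mat{R}_{u}\Mat{C}_{u}(\Mat{C}_{u}+\Mat{C}_{v})^{-1}=(\Mat{C}_{u}+\Mat{C}_{v})^{-1}$, which is exactly the target. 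The expression for $\Mat{\bar V}$ then follows verbatim by interchanging the roles of $u$ and $v$.

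The only real obstacle is bookkeeping in that telescoping step: the nonsymmetric placement of $(\Mat{C}_{u}+\Mat{C}_{v})^{-1}$ between the $\Mat{C}_{v}$ factors means the regrouping and cancellations must be tracked carefully, and it is easy to misapply the Woodbury identity here. I would guard against this by confirming the two cancellations above explicitly and, as an independent check, by verifying that the resulting $\Mat{\bar U}$ and $\Mat{\bar V}$ reassemble into a matrix satisfying the four Moore--Penrose conditions for $\Mat{X}$. An alternative route, staying closer to the cited Cline partition, would specialize Cline's general two-block pseudoinverse formula to the full-column-rank case and simplify; but the normal-equation argument above is shorter, fully self-contained, and makes the whole claim essentially a one-line identity once the full-rank reduction is in place.
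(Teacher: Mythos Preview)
Your proposal is correct, and it takes a genuinely different and more elementary route than the paper's own proof. The paper starts from Cline's general MP-inverse partition formula for $\begin{bmatrix}\Mat{U} & \Mat{V}\end{bmatrix}^{\pinv}$, substitutes $\Mat{U}=\Mat{X}_{u}^{\T}$, $\Mat{V}=\Mat{X}_{v}^{\T}$, then uses the full-column-rank hypothesis to show that Cline's auxiliary matrices $\Mat{C},\Mat{\tilde C}$ vanish, simplifies the remaining $\Mat{K},\Mat{\tilde K}$ via the Woodbury identity, and finally unpacks the resulting blocks into the stated form. You bypass all of this machinery by first noting that the stacked matrix $\Mat{X}$ itself has full column rank, so $\Mat{X}^{\pinv}=(\Mat{C}_{u}+\Mat{C}_{v})^{-1}\Mat{X}^{\T}$ immediately, and then reduce the lemma to the single scalar-free identity $\Mat{R}_{u}-\Mat{R}_{u}\Mat{C}_{v}\Mat{R}_{u}+\Mat{R}_{u}\Mat{C}_{v}(\Mat{C}_{u}+\Mat{C}_{v})^{-1}\Mat{C}_{v}\Mat{R}_{u}=(\Mat{C}_{u}+\Mat{C}_{v})^{-1}$, which you verify by two telescoping cancellations. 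What your approach buys is a fully self-contained argument that requires no external citation and no Woodbury expansion; what the paper's approach buys is an explicit link to the classical partitioned-pseudoinverse literature, which may be the authors' intent since they frame the lemma as a ``reformulation'' of Cline. Your closing remark already identifies this alternative Cline-based path, so you are aware of both options.
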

\begin{proof}
	See Supplementary Materials A.
\end{proof}

Lemma \ref{corollary_decomp} points out that, a matrix's MP inverse (e.g., $\Mat{X}^{\pinv}$) can be computed using the inverse matrices of its block components (e.g., $\Mat{X}_{u}^{\pinv}$ and $\Mat{X}_{v}^{\pinv}$). This introduces possibilities for aggregating a weight $\Mat{W} = \Mat{X}^{\pinv}\Mat{Y}$ equally from manipulating constituent counterparts $\Mat{W}_{u} = \Mat{X}_{u}^{\pinv}\Mat{Y}_{u}$ and $\Mat{W}_{v} = \Mat{X}_{v}^{\pinv}\Mat{Y}_{v}$. That is, $\Mat{W} = f_{\text{agg}}(\Mat{W}_{u}, \Mat{W}_{v})$, i.e., a single-aggregation strategy.

Bearing the above intuition in mind, we are able to derive such a single-aggregation strategy in action. This is delivered in Theorem \ref{thm_agg}.
\begin{theorem}\label{thm_agg}
	\textup{\bf Absolute Aggregation Law:} Let $\Mat{\hat{W}}=\Mat{X}^{\pinv}\Mat{Y}$, where $\Mat{X} = \begin{bmatrix} \Mat{X}_{u} \\ \Mat{X}_{v} \end{bmatrix}$ and  $\Mat{Y} = \begin{bmatrix} \Mat{Y}_{u} \\ \Mat{Y}_{v} \end{bmatrix}$ with $\Mat{X}_{u}$ and $\Mat{X}_{v}$ having full column ranks. Let $\Mat{\hat{W}}_{u}=\Mat{X}_{u}^{\pinv}\Mat{Y}_{u}$, $\Mat{\hat{W}}_{v}=\Mat{X}_{v}^{\pinv}\Mat{Y}_{v}$, and we have
	\begin{equation}
		\Mat{W} = \Mat{\mathcal{W}}_{u}\Mat{W}_{u}+\Mat{\mathcal{W}}_{u}\Mat{W}_{v},
	\end{equation}
	where
		\begin{equation*}\label{eq_agg_1}\nonumber
			\begin{cases}
				\Mat{\mathcal{W}}_{u} = \Mat{I} - \Mat{R}_{u}\Mat{C}_{v} + \Mat{R}_{u}\Mat{C}_{v}(\Mat{C}_{u} + \Mat{C}_{v})^{-1}\Mat{C}_{v}\\
				\Mat{\mathcal{W}}_{v} = \Mat{I} - \Mat{R}_{v}\Mat{C}_{u} + \Mat{R}_{v}\Mat{C}_{u}(\Mat{C}_{u} + \Mat{C}_{v})^{-1}\Mat{C}_{u}
			\end{cases}
   \end{equation*}
   \begin{equation}
       \begin{cases}
				\Mat{C}_{u} = \Mat{X}_{u}^{\T}\Mat{X}_{u}\\
				\Mat{C}_{v} = \Mat{X}_{v}^{\T}\Mat{X}_{v}
			\end{cases}
			\begin{cases}
				\Mat{R}_{u} = \Mat{C}_{u}^{-1}\\
				\Mat{R}_{v} = \Mat{C}_{v}^{-1}
			\end{cases}.
   \end{equation}
			
\end{theorem}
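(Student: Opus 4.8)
The plan is to treat this theorem as an almost immediate corollary of Lemma \ref{corollary_decomp}, since the matrices $\Mat{\mathcal{W}}_{u}$ and $\Mat{\mathcal{W}}_{v}$ are precisely the block factors $\Mat{\bar U}$ and $\Mat{\bar V}$ with the trailing $\Mat{X}_{u}^{\pinv}$ and $\Mat{X}_{v}^{\pinv}$ stripped off. First I would invoke Lemma \ref{corollary_decomp} to write the partitioned inverse as $\Mat{X}^{\pinv} = \begin{bmatrix} \Mat{\bar U} & \Mat{\bar V} \end{bmatrix}$, which is legitimate because $\Mat{X}_{u}$ and $\Mat{X}_{v}$ have full column rank, so $\Mat{C}_{u}=\Mat{X}_{u}^{\T}\Mat{X}_{u}$ and $\Mat{C}_{v}=\Mat{X}_{v}^{\T}\Mat{X}_{v}$ are invertible (guaranteeing $\Mat{R}_{u},\Mat{R}_{v}$ exist) and the sum $\Mat{C}_{u}+\Mat{C}_{v}$ is positive definite (guaranteeing $(\Mat{C}_{u}+\Mat{C}_{v})^{-1}$ exists). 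Then, using the block structure of $\Mat{Y}$, I would expand
\begin{equation}\nonumber
	\Mat{\hat W} = \Mat{X}^{\pinv}\Mat{Y} = \begin{bmatrix} \Mat{\bar U} & \Mat{\bar V} \end{bmatrix}\begin{bmatrix} \Mat{Y}_{u} \\ \Mat{Y}_{v} \end{bmatrix} = \Mat{\bar U}\Mat{Y}_{u} + \Mat{\bar V}\Mat{Y}_{v}.
\end{equation}

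The key algebraic step is to observe that each of the three additive terms defining $\Mat{\bar U}$ in Lemma \ref{corollary_decomp} shares the common right factor $\Mat{X}_{u}^{\pinv}$, so it factors as $\Mat{\bar U} = \bigl(\Mat{I} - \Mat{R}_{u}\Mat{C}_{v} + \Mat{R}_{u}\Mat{C}_{v}(\Mat{C}_{u}+\Mat{C}_{v})^{-1}\Mat{C}_{v}\bigr)\Mat{X}_{u}^{\pinv} = \Mat{\mathcal{W}}_{u}\Mat{X}_{u}^{\pinv}$, and symmetrically $\Mat{\bar V} = \Mat{\mathcal{W}}_{v}\Mat{X}_{v}^{\pinv}$. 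Substituting these into the expansion above and recalling the local solutions $\Mat{\hat W}_{u} = \Mat{X}_{u}^{\pinv}\Mat{Y}_{u}$ and $\Mat{\hat W}_{v} = \Mat{X}_{v}^{\pinv}\Mat{Y}_{v}$ yields $\Mat{\bar U}\Mat{Y}_{u} = \Mat{\mathcal{W}}_{u}(\Mat{X}_{u}^{\pinv}\Mat{Y}_{u}) = \Mat{\mathcal{W}}_{u}\Mat{\hat W}_{u}$ and likewise $\Mat{\bar V}\Mat{Y}_{v} = \Mat{\mathcal{W}}_{v}\Mat{\hat W}_{v}$, giving the claimed $\Mat{\hat W} = \Mat{\mathcal{W}}_{u}\Mat{\hat W}_{u} + \Mat{\mathcal{W}}_{v}\Mat{\hat W}_{v}$. (I note in passing that the second coefficient in the displayed statement should read $\Mat{\mathcal{W}}_{v}$ rather than $\Mat{\mathcal{W}}_{u}$, matching the definitions; I would correct this typo.)

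Given how direct this reduction is, the real substance of the result lives in Lemma \ref{corollary_decomp}, whose proof (deferred to the supplementary material) is the genuine obstacle: establishing the Cline-type partition of the Moore--Penrose inverse requires verifying the four Penrose conditions for the stacked matrix, or equivalently manipulating the normal-equation identity $\Mat{X}^{\pinv} = (\Mat{C}_{u}+\Mat{C}_{v})^{-1}\begin{bmatrix}\Mat{X}_{u}^{\T} & \Mat{X}_{v}^{\T}\end{bmatrix}$ into the stated block form using the full-column-rank pseudoinverse expression $\Mat{X}_{u}^{\pinv}=\Mat{R}_{u}\Mat{X}_{u}^{\T}$. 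For the theorem itself the only point demanding care is ensuring the factorizations $\Mat{\bar U}=\Mat{\mathcal{W}}_{u}\Mat{X}_{u}^{\pinv}$ and $\Mat{\bar V}=\Mat{\mathcal{W}}_{v}\Mat{X}_{v}^{\pinv}$ are exact term-by-term matches with the coefficients in the statement, which is a routine but necessary bookkeeping check.
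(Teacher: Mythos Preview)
Your proposal is correct and matches the paper's own proof essentially step for step: invoke Lemma~\ref{corollary_decomp} to write $\Mat{X}^{\pinv}=\begin{bmatrix}\Mat{\bar U}&\Mat{\bar V}\end{bmatrix}$, factor $\Mat{\bar U}=\Mat{\mathcal{W}}_{u}\Mat{X}_{u}^{\pinv}$ and $\Mat{\bar V}=\Mat{\mathcal{W}}_{v}\Mat{X}_{v}^{\pinv}$, then expand $\Mat{X}^{\pinv}\Mat{Y}=\Mat{\bar U}\Mat{Y}_{u}+\Mat{\bar V}\Mat{Y}_{v}$ and substitute the local solutions. Your observation about the typo ($\Mat{\mathcal{W}}_{v}$ in place of the second $\Mat{\mathcal{W}}_{u}$) is also correct.
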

\begin{proof}
	See Supplementary Materials B.
\end{proof}

The AA law, as stated in Theorem \ref{thm_agg}, provides a powerful insight. It establishes an intuition that we can aggregate two independently trained weights, such as $\Mat{W}_{u}$ and $\Mat{W}_{v}$, into their jointly trained counterpart $\Mat{W}$. This is achieved in an optimal way without any approximation or parameter tuning.

\textbf{Invariance to data partitioning.} To a certain extent, the achievement in Theorem \ref{thm_agg} attains the ultimate goal of FL, i.e., the \textit{equivalence} between weights trained in FL fashion and that trained on a centralized joint dataset. Traditionally, the FL aims to approximate or converge to the performance of the joint-trained model through multiple rounds of aggregation in a central server. However, the AA law provides a more direct path to this goal. It allows for an equivalence (not approximation or convergence) to manifest in a linear regression standpoint.

Supported by the AA law, the AFL achieves a level of performance that is on par with the joint-trained model, without the need for multiple rounds of aggregation. This direct equivalence could establish significant advancement in FL, as it simplifies the process and reduces the heavy computational overhead associated with multiple aggregation rounds.

Although the AA law in Theorem \ref{thm_agg} admits the absolute aggregation between two clients (i.e., $\Mat{\hat{W}}_{u}$ and $\Mat{\hat{W}}_{v}$), this pattern can be trivially broadcast to multi-client scenario. To elaborate, without loss of generality, we denote $\Mat{\hat W}_{\textup{agg},k-1}$ as the accumulated aggregation (AcAg) weight that has aggregated $k-1$ clients. By rewriting  \eqref{eq_agg_1}, the next aggregation with $\Mat{\hat W}_{k}$ ($i=k,\dots, K$) reads
\begin{equation}\label{eq_agg_mult}
	\Mat{\hat W}_{\textup{agg},k} = \Mat{\mathcal{W}}_{\textup{agg}}\Mat{\hat W}_{\textup{agg},k-1}       +     \Mat{\mathcal{W}}_{k} \Mat{\hat W}_{k}.
\end{equation} 
According to \eqref{eq_agg_1}, let $\Mat{C}_{u}\to\Mat{C}_{\textup{agg},k-1}$, $\Mat{C}_{v}\to\Mat{C}_{k}$, and we have $\Mat{C}_{\text{agg},k} = \Mat{C}_{\textup{agg},k-1} + \Mat{C}_{k}$. Hence, 
\begin{equation}\label{eq_weighting}
\begin{cases}
	\Mat{\mathcal{W}}_{\textup{agg}} = \Mat{I}-\Mat{C}_{\textup{agg},k-1}^{-1}\Mat{C}_{k}(\Mat{I}-\Mat{C}_{\textup{agg},k}^{-1}\Mat{C}_{k}), \\
	 \Mat{\mathcal{W}}_{k} = \Mat{I}-\Mat{C}_{k}^{-1}\Mat{C}_{\textup{agg},k-1}(\Mat{I}-\Mat{C}_{\textup{agg},k}^{-1}\Mat{C}_{\textup{agg},k-1}),
\end{cases}
\end{equation}
where 
\begin{equation}
\begin{cases}
	\Mat{C}_{\textup{agg},k} = \Mat{C}_{\textup{agg},k-1} + \Mat{C}_{k} = \sum_{i}^{k} \Mat{C}_{i} , \\
	\Mat{C}_{i} = \Mat{X}_{i}^{\T}\Mat{X}_{i}.
\end{cases}	
\end{equation}
As such, the joint-trained weight $\Mat{\hat W} = \Mat{\hat{W}}_{\textup{agg},k}$ is produced by aggregating among individual clients in a pair-wise manner. It is interesting to find that the optimal aggregation is in fact a linear combination between two matrices (e.g., $\Mat{\hat W}_{\textup{agg},k-1}$ and $\Mat{\hat W}_{k}$) weighted by $\Mat{\mathcal{W}}_{\textup{agg}}$ and $\Mat{\mathcal{W}}_{k}$ respectively.

Note that the aggregation does NOT necessarily follow a sequential index from $1$ to $K$. We can randomly sample an available client to aggregate with the AcAg weight. This is revealed by the fact that elements in the weighting matrices are somewhat interchangeable (e.g., see \eqref{eq_weighting}).

\subsection{RI Process: AA Law in Rank-deficient Scenario}
As indicated in Theorem \ref{thm_agg}, the equivalence in AA law relies on an assumption of a full-column rank in each client, e.g., $\Mat{X}_{k}$ having full-column rank. This may not hold in the large client number scenario where each client has limited data (e.g., $N_{k}<y_{\text{e}}$), rendering the full-column rank assumption invalid. To address this, we implement the AA law with an RI process. Specially, we include a regularization term as an intermediary during the local stage, and remove it after the aggregation stage.

To this end, we include an regularization term controlled by $\gamma$ in the objective function, i.e., 
\begin{equation}\label{eq_mes_r}
	\mathcal{L}(\Mat{W}_{k}^{\text{r}}) = \SFN{\Mat{Y}_{k} - \Mat{X}_{k}\Mat{W}_k^{\text{r}}} + \gamma \SFN{\Mat{W}_{k}^{\text{r}}}, 
\end{equation}
which rewrites the MP inverse based solution in \eqref{eq_w_k} into
\begin{equation}\label{eq_w_k_r}
	\Mat{\hat{W}}_{k}^{\text{r}} = \argmin{\Mat{W}_{k}^{\text{r}}} \mathcal{L}(\Mat{W}_{k}^{\text{r}}) = (\Mat{X}_{k}^{\T}\Mat{X}_{k} + \gamma \Mat{I})^{-1}\Mat{X}_{k}^{\T}\Mat{Y}_{k}.
\end{equation}
Such a solution does not suffer from rank-deficiency issues, as $\Mat{X}_{k}^{\T}\Mat{X}_{k} + \gamma \Mat{I}$ is positive-definite thereby a full-rank matrix.


During aggregation, we substitute $\Mat{\hat{W}}_{k}$ in \eqref{eq_w_k} with $\Mat{\hat{W}}_{k}^{\text{r}}$ using \eqref{eq_w_k_r}. This substitution would clearly result in deviations (i.e., $\Mat{\hat{W}}_{\text{agg},k}^{\text{r}}\ne\Mat{\hat{W}}_{\text{agg},k}$), which is depicted in Theorem \ref{thm_agg_r}.
\begin{theorem}\label{thm_agg_r}
	\textup{\bf RI-AA Law:} The relation between $\Mat{\hat{W}}_{\text{agg},k}^{\text{r}}$ and $\Mat{\hat{W}}_{\text{agg},k}$ follows
\begin{equation}
		\hat{\Mat{W}}_{\textup{agg},k}^{\textup{r}} = (\Mat{C}_{\textup{agg},k}^{\textup{r}})^{-1} \Mat{C}_{\textup{agg},k} \hat{\Mat{W}}_{\textup{agg},k},
\end{equation}
	where
	\begin{equation}\label{eq_agg_r}
 \resizebox{0.42\textwidth}{!}{
		$\Mat{C}_{\textup{agg},k}^{\textup{r}} = \Mat{C}_{\textup{agg},k} + k\gamma \Mat{I} = \sum_{i}^{k} \Mat{C}_{i}^{\textup{r}}, \quad\Mat{C}_{i}^{\textup{r}} = \Mat{X}_{i}^{\T}\Mat{X}_{i} +\gamma \Mat{I}.$}
	\end{equation}
\end{theorem}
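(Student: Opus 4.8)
The plan is to reduce the regularized aggregation to an ordinary (unregularized) one via the standard ridge-to-pseudoinverse augmentation, so that the AA law of Theorem~\ref{thm_agg} applies verbatim. First I would observe that the ridge solution in \eqref{eq_w_k_r} coincides with a pseudoinverse solution on augmented data: setting $\Mat{\tilde{X}}_{k} = \begin{bmatrix}\Mat{X}_{k}\\ \sqrt{\gamma}\,\Mat{I}\end{bmatrix}$ and $\Mat{\tilde{Y}}_{k} = \begin{bmatrix}\Mat{Y}_{k}\\ \Mat{0}\end{bmatrix}$ gives $\Mat{\tilde{X}}_{k}^{\T}\Mat{\tilde{X}}_{k} = \Mat{X}_{k}^{\T}\Mat{X}_{k}+\gamma\Mat{I} = \Mat{C}_{k}^{\textup{r}}$, which is positive definite and hence of full column rank, so $\Mat{\tilde{X}}_{k}^{\pinv}\Mat{\tilde{Y}}_{k} = (\Mat{C}_{k}^{\textup{r}})^{-1}\Mat{X}_{k}^{\T}\Mat{Y}_{k} = \Mat{\hat{W}}_{k}^{\textup{r}}$. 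Since the RI recursion \eqref{eq_agg_mult}--\eqref{eq_weighting} depends only on the covariances $\Mat{C}_{i}^{\textup{r}}$ and the local weights $\Mat{\hat{W}}_{i}^{\textup{r}}$, running it is identical to running the ordinary AA law on the augmented clients $(\Mat{\tilde{X}}_{k},\Mat{\tilde{Y}}_{k})$, each of which has full column rank regardless of $N_{k}$. This is precisely what removes the rank-deficiency obstruction.

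Next I would invoke the multi-client form of Theorem~\ref{thm_agg}, which guarantees that aggregating full-rank clients reproduces the joint pseudoinverse solution. Stacking all augmented clients into $\Mat{\tilde{X}}$ and $\Mat{\tilde{Y}}$, this yields $\Mat{\hat{W}}_{\textup{agg},k}^{\textup{r}} = \Mat{\tilde{X}}^{\pinv}\Mat{\tilde{Y}} = (\Mat{\tilde{X}}^{\T}\Mat{\tilde{X}})^{-1}\Mat{\tilde{X}}^{\T}\Mat{\tilde{Y}}$. The two Gram-type terms evaluate cleanly: $\Mat{\tilde{X}}^{\T}\Mat{\tilde{X}} = \sum_{i}^{k}(\Mat{X}_{i}^{\T}\Mat{X}_{i}+\gamma\Mat{I}) = \Mat{C}_{\textup{agg},k}+k\gamma\Mat{I} = \Mat{C}_{\textup{agg},k}^{\textup{r}}$, while the zero blocks of each $\Mat{\tilde{Y}}_{i}$ annihilate the $\sqrt{\gamma}\,\Mat{I}$ rows so that $\Mat{\tilde{X}}^{\T}\Mat{\tilde{Y}} = \sum_{i}^{k}\Mat{X}_{i}^{\T}\Mat{Y}_{i}$. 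Hence $\Mat{\hat{W}}_{\textup{agg},k}^{\textup{r}} = (\Mat{C}_{\textup{agg},k}^{\textup{r}})^{-1}\sum_{i}^{k}\Mat{X}_{i}^{\T}\Mat{Y}_{i}$.

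Finally I would identify this cross-covariance sum with the unregularized aggregate. Applying the same multi-client theorem to the original full-rank clients gives $\Mat{\hat{W}}_{\textup{agg},k} = \Mat{C}_{\textup{agg},k}^{-1}\sum_{i}^{k}\Mat{X}_{i}^{\T}\Mat{Y}_{i}$, i.e. $\sum_{i}^{k}\Mat{X}_{i}^{\T}\Mat{Y}_{i} = \Mat{C}_{\textup{agg},k}\Mat{\hat{W}}_{\textup{agg},k}$. Substituting into the previous display produces $\Mat{\hat{W}}_{\textup{agg},k}^{\textup{r}} = (\Mat{C}_{\textup{agg},k}^{\textup{r}})^{-1}\Mat{C}_{\textup{agg},k}\Mat{\hat{W}}_{\textup{agg},k}$, which is the claimed RI-AA law. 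The single algebraic fact powering the whole argument is the cancellation $\Mat{C}_{k}^{\textup{r}}\Mat{\hat{W}}_{k}^{\textup{r}} = \Mat{X}_{k}^{\T}\Mat{Y}_{k} = \Mat{C}_{k}\Mat{\hat{W}}_{k}$: the regularizer disappears once each local weight is multiplied back by its own (regularized) covariance, so the two aggregations share the same numerator and differ only through their normalizers $\Mat{C}_{\textup{agg},k}^{\textup{r}}$ versus $\Mat{C}_{\textup{agg},k}$.

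The step I expect to be the crux is correctly tracking the regularization term through aggregation. A naive reading might expect the $k$ clients' regularizers to merge into a single $\gamma\Mat{I}$, but they do not: each augmented client carries its own independent $\sqrt{\gamma}\,\Mat{I}$ block, so $k$ clients contribute $k\gamma\Mat{I}$ rather than $\gamma\Mat{I}$. The augmentation viewpoint makes this accumulation transparent and simultaneously certifies that every intermediate Gram matrix $\Mat{C}_{\textup{agg},j}^{\textup{r}}$ is invertible, which is exactly the hypothesis Theorem~\ref{thm_agg} requires. An alternative route avoiding augmentation would induct on $k$ through the recursion \eqref{eq_agg_mult}--\eqref{eq_weighting}, carrying the invariant $\Mat{C}_{\textup{agg},j}^{\textup{r}}\Mat{\hat{W}}_{\textup{agg},j}^{\textup{r}} = \sum_{i}^{j}\Mat{X}_{i}^{\T}\Mat{Y}_{i}$ and using the precision-weighted form $\Mat{\hat{W}} = \Mat{C}^{-1}(\Mat{C}_{u}\Mat{\hat{W}}_{u}+\Mat{C}_{v}\Mat{\hat{W}}_{v})$ of the two-client AA law in the inductive step; verifying that form directly from the weighting matrices $\Mat{\mathcal{W}}_{u},\Mat{\mathcal{W}}_{v}$ is the more tedious path, which is why I would prefer the augmentation argument.
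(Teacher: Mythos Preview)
Your proof is correct and takes a genuinely different route from the paper's. The paper works directly with the two-client weighting matrices $\Mat{\mathcal{W}}_{u}^{\textup{r}},\Mat{\mathcal{W}}_{v}^{\textup{r}}$: it expands $\Mat{\mathcal{W}}_{u}^{\textup{r}}\Mat{\hat{W}}_{u}^{\textup{r}}$ explicitly and applies the Woodbury identity twice to collapse it to $(\Mat{C}_{u}^{\textup{r}}+\Mat{C}_{v}^{\textup{r}})^{-1}\Mat{X}_{u}^{\T}\Mat{Y}_{u}$, then compares with the unregularized joint solution and asserts the multi-client extension by analogy. Your augmentation device sidesteps all of this: realising $\Mat{\hat{W}}_{k}^{\textup{r}}$ as the honest pseudoinverse solution on $(\Mat{\tilde{X}}_{k},\Mat{\tilde{Y}}_{k})$ lets you import Theorem~\ref{thm_agg} wholesale, and the Gram and cross-covariance sums fall out by block arithmetic. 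Your route makes transparent both why full column rank is restored (the $\sqrt{\gamma}\,\Mat{I}$ block supplies it) and why the regularizer accumulates as $k\gamma\Mat{I}$; the paper's route is more self-contained in that it never leaves the weighting-matrix formulation, at the cost of the Woodbury bookkeeping you relegated to the ``alternative route'' aside.

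One wording fix: in your third paragraph you invoke ``the same multi-client theorem to the original full-rank clients'' to obtain $\Mat{\hat{W}}_{\textup{agg},k}=\Mat{C}_{\textup{agg},k}^{-1}\sum_{i}^{k}\Mat{X}_{i}^{\T}\Mat{Y}_{i}$, but the whole premise of the RI process is that individual $\Mat{X}_{i}$ may be rank-deficient, so Theorem~\ref{thm_agg} need not apply to them. What you actually need---and what the paper's proof also uses---is only that the \emph{full stack} $\Mat{X}$ has full column rank, giving $\Mat{\hat{W}}_{\textup{agg},k}=\Mat{X}^{\pinv}\Mat{Y}=(\Mat{X}^{\T}\Mat{X})^{-1}\Mat{X}^{\T}\Mat{Y}$ by a direct one-line computation; no appeal to Theorem~\ref{thm_agg} is required for that identity.
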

\begin{proof}
	See Supplementary Materials C.
\end{proof}

Theorem \ref{thm_agg_r} establishes the relation between $\Mat{\hat{W}}_{\text{agg},k}^{\text{r}}$ and $\Mat{\hat{W}}_{\text{agg},k}$, which is a one-to-one mapping, such that $\Mat{\hat{W}}_{\text{agg},k}$ can be restored by manipulating $\Mat{\hat{W}}_{\text{agg},k}^{\text{r}}$, i.e.,
\begin{align}\label{eq_rf}
	\nonumber \hat{\Mat{W}}_{\textup{agg},k} &=(\Mat{C}_{\textup{agg},k})^{-1} \Mat{C}_{\textup{agg},k}^{\textup{r}} \hat{\Mat{W}}_{\textup{agg},k}^{\textup{r}} \\ &= (\Mat{C}_{\textup{agg},k}^{\textup{r}} - k\gamma \Mat{I})^{-1} \Mat{C}_{\textup{agg},k}^{\textup{r}} \hat{\Mat{W}}_{\textup{agg},k}^{\textup{r}}.
\end{align}
That is, we are able to attain $\Mat{\hat{W}}_{\text{agg},k}$ by removing the impact of the regularization term $\gamma$ to counter the ill-conditioned constraint in the large client number scenario. The implementation of AFL is summarized in Algorithm \ref{alg:afl}.

\begin{algorithm}
    \caption{Analytic Federated Learning}\label{alg:afl}
    \vspace{-0.3em}
        \begin{algorithmic}
            \STATE {\bfseries Input:} $\dataset{D}_{k}, k = 0, ... ,K$, $\gamma$, and pre-trained backbone $\bm{\Theta}$. \\
            \STATE {\bfseries Server Executes:}
            \STATE 1. \textbf{for} each client $k$ \textbf{in parallel do}\\
            \STATE 2. \quad $\Mat{\hat W}_{k}^{\text{r}}, \Mat{C}_{k}^{\text{r}} \leftarrow$ \textbf{Local Stage}($k$, $\dataset{D}_{k}$, $\gamma$). \\
            \STATE 3. \textbf{end for} \\
            \STATE 4. $\Mat{\hat W} \leftarrow $ \textbf{Aggregation Stage}($\{\Mat{\hat{W}}_{k}^{\text{r}}, \Mat{C}_{k}^{\text{r}}, \gamma\}_{k=1}^{K}$). \\
            \STATE {\bfseries Local Stage:} client $k$ with $\dataset{D}_{k}$ and $\gamma$. \\
            \STATE 1. Get embedding and label matrices using \eqref{eq_emb_label}.\\
            \STATE 2. Obtain weight matrix $\Mat{\hat W}_{k}^{\text{r}}$ by \eqref{eq_w_k_r}.\\
            \STATE 3. Get $\Mat{C}_{k}^{\text{r}} =  \Mat{X}_{k}^{\T}\Mat{X}_{k}+\gamma\Mat{I}$. \\
            \STATE 4. Return $\Mat{\hat W}_{k}^{\text{r}}, \Mat{C}_{k}^{\text{r}}$.
            \STATE {\bf Aggregation Stage:} with $\{\Mat{\hat{W}}_{k}^{\text{r}}, \Mat{C}_{k}^{\text{r}}, \gamma\}_{k=1}^{K}$.
            \STATE 1. Initialize  $\Mat{\hat W}_{\textup{agg},0}^{\text{r}}=\Mat{0}$, $\Mat{C}_{\textup{agg},0}^{\text{r}}=\Mat{0}$.
            \STATE 2. \textbf{for} $k$ in range($K$):
            \STATE \quad \textcolor{blue}{i)} Aggregate $\Mat{\hat W}_{\textup{agg},k}^{\text{r}}$ with $\Mat{\hat W}_{k}^{\text{r}}$ using \eqref{eq_agg_mult}.
            \STATE \quad \textcolor{blue}{ii)} Update $\Mat{C}_{\textup{agg},k}^{\text{r}}=\Mat{C}_{\textup{agg},k-1}^{\text{r}}+\Mat{C}_{k}^{\text{r}}$.
            \STATE 3. \textbf{end for}.
            \STATE 4. Restore $\Mat{\hat W}=\Mat{\hat W}_{\text{agg},K}$ with $\Mat{\hat W}_{\text{agg},K}^{\text{r}}$ in \eqref{eq_rf}.
        \end{algorithmic}
\end{algorithm}

\textbf{Benefits of Adopting AL in AFL.} Inheriting from the AL technique, the AFL admits several merits over its gradient-based counterparts as follows. i) \textit{Fast training and convergence:} the analytical solutions allow AFL to finish the training and aggregation in one shot, exhibiting fast training and convergence. Also, the analytical solutions free the AFL from any convergence issue as no iterative-search based action is executed. ii) \textit{Low communication cost:} the single-round aggregation only requires a single communication between the clients and the server, which significantly reduces the communication cost. iii) \textit{Data heterogeneity invariance:} the invariance to data partitioning does not pose any constraint on data partition strategy. That is, the equivalence is hold across all possible data heterogeneous scenarios (e.g., see Section \ref{sec_data_partition}). iv) \textit{Client-number invariance:} for a complete dataset $\dataset{D}$ partitioned among $K$ clients (i.e., $\{\dataset{D}_k\}_{k=1}^{K}$), according to Theorem \ref{thm_agg} and \eqref{eq_agg_mult}, when the weights from all $K$ clients are aggregated, the resulting weight is identical to that trained on the full dataset $\dataset{D}$. To validate AA law with RI-process, we conduct an experiment on a dummy dataset and show the invariance (see Supplementary Materials D). 

\textbf{An AL Branch of Federated Learning.} The AFL incorporates the AL technique and can be considered as an AL branch within the FL context. The AL and its recursive formulation have demonstrated remarkable adaptability in continual learning utilizing a well-trained backbone \cite{FOAL2024NeurIPS, GACL_Zhuang_NeurIPS2024}. In this case, this intuition has been extended to the FL field through non-trivial derivations.

\section{Experiments}
In this section, we provide extensive experiments to validate the proposed AFL, including comparison with FL state-of-the-arts and analysis under various settings. The training time and ablation study of regularization are also investigated.

\begin{table*}[t!]
	\centering  
	\caption{The top-1 accuracy (\%) of compared methods under two non-IID settings. Settings controlled by $\alpha$ and $s$ are NIID-1 and  NIID-2 respectively. The data is reported as average and standard deviation after 3 runs. Results in \textbf{bold} are the best within the compared methods in the same setting.}
	\resizebox{1\textwidth}{!}{
		\begin{tabular}{llcccccccc}
			\toprule
			\textbf{Dataset} & \textbf{Setting} &  \textbf{FedAvg} & \textbf{FedProx} & \textbf{MOON} & \textbf{FedGen} & \textbf{FedDyn} & \textbf{FedNTD} & \textbf{FedDisco} & \textbf{AFL} \\ \hline
			\multirow{4}{*}{CIFAR-10} & $\alpha=0.1$ & $64.02_{\pm{0.18}}$ & $64.07_{\pm{0.08}}$ & $63.84_{\pm{0.03}}$ & $64.14_{\pm{0.24}}$ & $64.77_{\pm{0.11}}$ & $64.64_{\pm{0.02}}$ & $63.83_{\pm{0.08}}$ & $\textbf{80.75}_{\pm{0.00}}$ \\
			& $\alpha=0.05$ & $60.52_{\pm{0.39}}$ & $60.39_{\pm{0.09}}$ & $60.28_{\pm{0.17}}$ & $60.65_{\pm{0.19}}$ & $60.35_{\pm{0.54}}$ & $61.16_{\pm{0.33}}$ & $59.90_{\pm{0.05}}$ & $\textbf{80.75}_{\pm{0.00}}$ \\ \cline{2-10}
			& $s=4$ & $68.47_{\pm{0.13}}$ & $68.46_{\pm{0.08}}$ & $68.47_{\pm{0.15}}$ & $68.24_{\pm{0.28}}$  & $73.50_{\pm{0.11}}$ & $70.24_{\pm{0.11}}$ & $65.04_{\pm{0.11}}$ & $\textbf{80.75}_{\pm{0.00}}$\\ 
			& $s=2$ & $57.81_{\pm{0.03}}$ & $57.61_{\pm{0.12}}$ & $57.72_{\pm{0.15}}$ & $57.02_{\pm{0.18}}$ & $64.07_{\pm{0.09}}$ & $58.77_{\pm{0.18}}$ & $58.78_{\pm{0.02}}$& $\textbf{80.75}_{\pm{0.00}}$ \\  \hline
			\multirow{4}{*}{CIFAR-100} 
			& $\alpha=0.1$ & $56.62_{\pm{0.12}}$ & $56.45_{\pm{0.22}}$ & $56.58_{\pm{0.02}}$ & $56.48_{\pm{0.17}}$  & $57.55_{\pm{0.08}}$& $56.60_{\pm{0.14}}$ & $55.79_{\pm{0.04}}$ & $\textbf{58.56}_{\pm{0.00}}$\\ 
            & $\alpha=0.01$ & $32.99_{\pm{0.20}}$ & $33.37_{\pm{0.09}}$ & $33.34_{\pm{0.11}}$ & $33.09_{\pm{0.09}}$ & $36.12_{\pm{0.08}}$ & $32.59_{\pm{0.21}}$ & $25.72_{\pm{0.08}}$ & $\textbf{58.56}_{\pm{0.00}}$ \\ \cline{2-10}
			& $s=10$ & $55.76_{\pm{0.13}}$ & $55.80_{\pm{0.16}}$ & $55.70_{\pm{0.25}}$ & $60.93_{\pm{0.17}}$ & $\textbf{61.09}_{\pm{0.09}}$ & $54.69_{\pm{0.15}}$ & $54.65_{\pm{0.09}}$ & $58.56_{\pm{0.00}}$ \\ 
			& $s=5$ & $48.33_{\pm{0.15}}$ & $48.29_{\pm{0.14}}$ & $48.34_{\pm{0.19}}$ & $48.12_{\pm{0.06}}$ & $\textbf{59.34}_{\pm{0.11}}$ & $47.00_{\pm{0.19}}$ & $45.86_{\pm{0.18}}$ &  $58.56_{\pm{0.00}}$ \\  \hline
			\multirow{4}{*}{Tiny-ImageNet} & $\alpha=0.1$ & $46.04_{\pm{0.27}}$ & $46.47_{\pm{0.23}}$ & $46.21_{\pm{0.14}}$ & $46.27_{\pm{0.14}}$ & $47.72_{\pm{0.22}}$ & $46.17_{\pm{0.16}}$ & $47.48_{\pm{0.06}}$ & $\textbf{54.67}_{\pm{0.00}}$\\
			& $\alpha=0.01$ & $32.63_{\pm{0.19}}$ & $32.26_{\pm{0.14}}$ &$32.38_{\pm{0.20}}$ & $32.33_{\pm{0.14}}$ &$35.19_{\pm{0.06}}$ & $31.86_{\pm{0.44}}$ & $27.15_{\pm{0.10}}$&    $\textbf{54.67}_{\pm{0.00}}$\\  \cline{2-10}
			& $s=10$ & $39.06_{\pm{0.26}}$ & $38.97_{\pm{0.23}}$ & $38.79_{\pm{0.14}}$ & $38.82_{\pm{0.16}}$ & $41.36_{\pm{0.06}}$ & $37.55_{\pm{0.09}}$ & $38.86_{\pm{0.12}}$ & $\textbf{54.67}_{\pm{0.00}}$\\  
			& $s=5$ & $29.66_{\pm{0.19}}$  & $29.17_{\pm{0.16}}$ & $29.24_{\pm{0.30}}$ & $29.37_{\pm{0.25}}$  & $35.18_{\pm{0.18}}$ &  $29.01_{\pm{0.14}}$&$27.72_{\pm{0.18}}$  &  $\textbf{54.67}_{\pm{0.00}}$\\ 
			\bottomrule
		\end{tabular}}
	\label{table:compare}
\end{table*}

\subsection{Comparison with FL Techniques} \label{sec:compared}
We conduct comparison with FL state-of-the-arts, including FedAvg \cite{FedAvg2017ICML}, FedProx \cite{FedProx2020MLSYS}, MOON \cite{MOON2021CVPR}, FedGen \cite{FedGen2021ICML}, FedDyn \cite{FedDyn2021ICLR}, FedNTD \cite{FedNTD2022NeurIPS} and FedDisco \cite{FedDisco2023ICML} under various non-IID settings. 

\textbf{Dataset and Model.} We validate the baselines and our proposed AFL in 3 popular benchmark datasets in FL: CIFAR-10 \cite{CIFAR2009}, CIFAR-100 \cite{CIFAR2009} and Tiny-ImageNet \cite{tiny-imagenet}. For all datasets, we use a ResNet-18 \cite{resnet2016CVPR} pretrained on ImageNet-1k \cite{imagnet2009CVPR} as backbone. We freeze the backbones in all FL methods. 


\textbf{Data Partition.} For simulating Non-IID scenarios in FL, we specify two Non-IID data partition methods including Latent Dirichlet Allocation \cite{FedDF2020NEURIPS} (LDA, denoted as NIID-1) and Sharding \cite{FedDF2020NEURIPS} (denoted as NIID-2). In the LDA setting, data assigned to each clients is forced to satisfy the Dirichlet distribution, and degree of the data heterogeneity is controlled by parameter $\alpha$. Smaller $\alpha$ leads to a more heterogeneous data distribution. In the Sharding strategy, the data is sorted by labels and divided into same-sized shards, and $s$ control the heterogeneity, i.e. the number of shards per client. When $s$ takes a smaller value, the data is more heterogeneous. We choose $\alpha = 0.1, 0.01$ and $s = 10, 4$ for CIFAR-100 and Tiny-ImageNet. For CIFAR-10, $\alpha$ is set to $0.1$ and $0.01$, and $s$ is set to $4$ and $2$. Most existing methods are validated on data partition of $\alpha = 0.3$ to $1.0$ and $s = 10$ \cite{FedNTD2022NeurIPS, FedLAW2023ICML}. Here we provide more challenging settings to validate the robustness under extremely heterogeneous cases.

\textbf{Implementation Details.} In all the experiments, we use 100 clients for each method and use the same partitioned dataset within experiments of the same data setting. We implement the AFL with a $\gamma=1$ RI process (any $\gamma$ would suffice, see the ablation study). Each experiment setting is run 3 times and the mean and standard deviation of the best top-1 classification accuracy during training are reported. The implementation details of gradient-based compared methods can be found in Supplementary Materials E.


\textbf{Experimental Results.} We report the results of the compared methods under the setting of NIID-1 and NIID-2 in Table \ref{table:compare}. As shown in the table, except for slightly weaker results than those of FedDyn in the NIID-2 setting, the AFL obtains very competitive performance compared with other methods across various settings. The degree of data heterogeneity does not at all affect the AFL. For instance, the accuracy remains 80.75\% on CIFAR-10 for various NIID-1 and NIID-2 settings. Although slight differences could occur among various settings, it barely impacts the classification accuracy (an AL property indicated in \cite{brmp2021}). The same pattern repeats on CIFAR-100 and Tiny-ImageNet. Uniquely, the AFL obtains identical results for all 3 repeated runs, i.e., the standard deviations are zeros! This is because the AFL does not introduce any stochastic element, so the repeated computation in each run is naturally equivalent to one another, hence the zero standard deviation. 

Notably, when introducing a pre-trained backbone, the compared methods yield similar results and even FedAvg can become a competitive baseline. The reason could be that, when incorporating a pret-trained backbone, the FL process can be more stable with a better start point and methods stabilizing the FL process could be less effective. This phenomena has also been witnessed in other FL studies \cite{FLPretrained2023ICLR, FedPR2023CVPR}. However, other FL methods still experience performance reductions under severe non-IID scenarios. For example, FedDyn performs relatively well (e.g., 57.55\%) under NIID-1 ($\alpha=0.1$) on CIFAR-100 but undergoes a performance degradation (to 36.12\%) when $\alpha=0.01$. This pattern is rather consistent in other compared methods, such as FedAvg ($56.62\%\to32.99\%$), FedProx ($56.45\%\to33.37\%$), and MOON ($56.58\%\to33.34\%)$, and is also true across all datasets. The performance distributions regarding NIID-2 for these compared methods resemble those in NIID-1, where smaller $s$ values invite performance degradation among existing FL counterparts. For instance, the FedDyn exhibits $73.50\%\to64.07\%$ for $s=4\to2$ on CIFAR-10 while the AFL obtains competitive and identical results (e.g., $80.75\%$).

\subsection{Analysis on Data Partition}\label{sec_data_partition}
Here we provide broaden non-IID partitions to demonstrate AFL's invariance to data partitioning. This includes varying the client number and the non-IID degree. We also provide the IID partition results. 

\textbf{Client-number Invariance.} We compare our AFL and the FedAvg under NIID-1 setting on CIFAR-100 and Tiny-ImageNet with $\alpha=0.1$, and vary the number of clients from 100 to 500 and 1000. The results are shown in Figure \ref{fig:num_client}. We observe that the AFL keeps an identical performance when scaling the number of clients, while the FedAvg experiences a performance decline along the increasing number (e.g, 56.57\%$\to$41.01\% for $K=100\to1000$ on CIFAR-100). This provides a strong evidence to support the invariance to data partitioning in our AFL. It also showcases the capability of pushing the AFL to large-scale client training scenario without any performance compromise.

\begin{figure}[h]
    \centering
    \includegraphics[width=\linewidth]{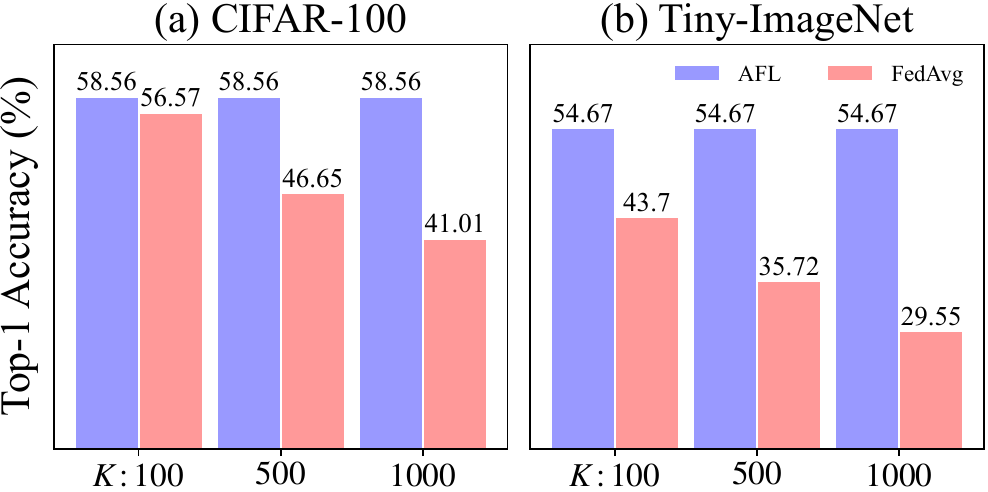}
    \caption{Accuracy over various number of clients.}
    \label{fig:num_client}
\end{figure}

\textbf{Data Heterogeneity Invariance.} Here, we fix the client number to 100 and partition the CIFAR-100 under the setting of NIID-1 with $\alpha = {0.005, 0.01, 0.1, 1}$, including the IID setting as well. We report the results of AFL and FedAvg in Table \ref{table:noniid}. The FedAvg suffers from more accuracy losses (e.g., $57.72\%\to24.74\%$ for $\alpha=0.1\to0.005$) as the data heterogeneity grows higher. Under the IID partition, the FedAvg receives its best performance (i.e., $57.89\%$), which is still less competitive than our AFL (i.e., $58.56\%$). On the other hand, AFL obtains identical results (i.e., $58.56\%$) across various settings, including non-IID and IID ones. This is another strong proof of the AA law indicating the weight-invariant property of AFL. Our AFL is invariant to any degree of data heterogeneity, leading to unchanged performance in all possible data heterogeneous partition scenarios, even in extreme data heterogeneous cases (e.g., $\alpha=0.005$). 

\begin{table}[!h]
\footnotesize
	  
	\captionof{table}{The top-1 classification accuracy (\%) of AFL and FedAvg under different data heterogeneity.}
    \begin{tabular}{lccccc} \toprule
		Acc. (\%)& $\alpha=0.005$ & $\alpha=0.01$ & $\alpha=0.1$ & $\alpha=1$ & IID \\ \hline
		FedAvg  & 24.74 & 33.09 & 56.57 & 57.72 & 57.89\\ \hline
		AFL  & \textbf{58.56} & \textbf{58.56} & \textbf{58.56} & \textbf{58.56} & \textbf{58.56} \\ \bottomrule
	\end{tabular}
 \centering
	\label{table:noniid}
\end{table}

\subsection{Training Efficiency}
\textbf{Fast Training with Single-round Aggregation.} We plot the training evolution curves of accuracy on CIFAR-100 and Tiny-ImageNet in Figure \ref{fig:time} and report the execution time for each method in the legend bars. Compared FL methods take 60s to 100s on CIFAR-100 (100s to 160s on Tiny-ImageNet) to complete an aggregation round, leading to a total training time of 30,000s to 50,000 (50,000s to 80,000s). AFL, however, spends 236.61s on CIFAR-100 and 349.50s on Tiny-ImageNet, achieving approximately 150$\times$-200$\times$ speedups over its FL counterparts due to only one aggregation.

\begin{figure}[h]
    \centering
    \includegraphics[width=0.95\linewidth]{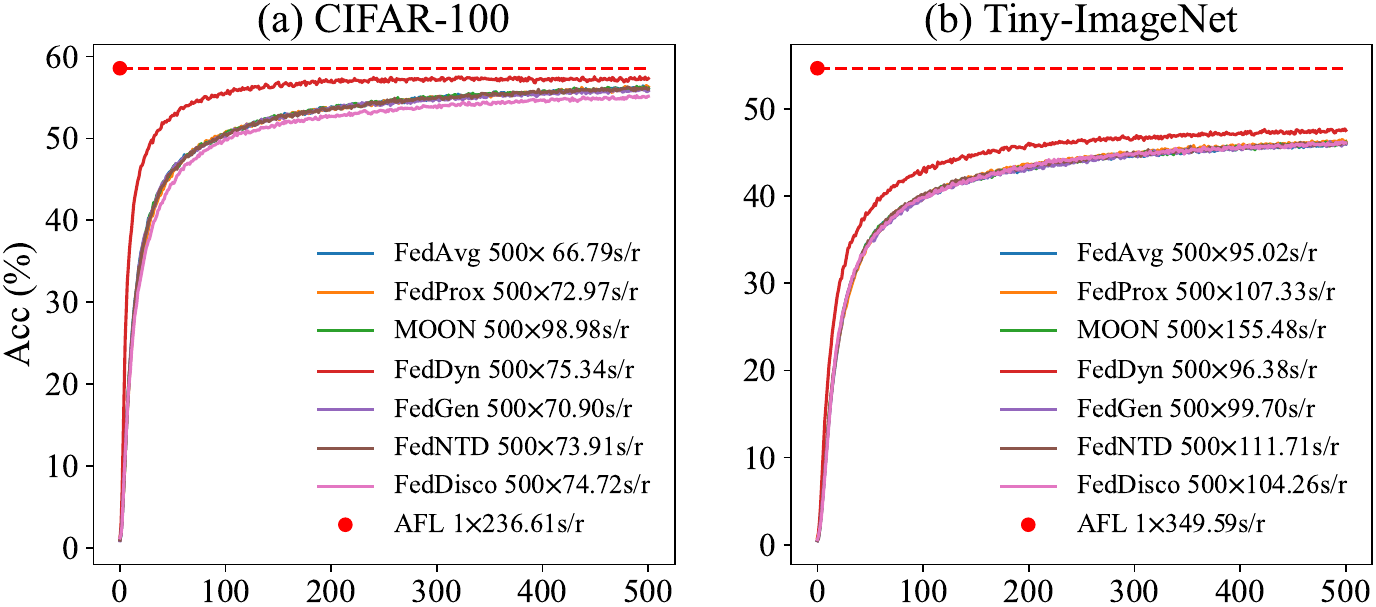}
    \caption{Accuracy curves with communication rounds. Average training time is reported in the legends.}
    \label{fig:time}
\end{figure}

\subsection{Ablation Study of RI Process}
Here, we conduct an ablation study regarding the RI process by reporting accuracies of AFL with $\alpha = 0.1$ and $K = 100, 500, 1000$ under different values of $\gamma$. The results without and with the RI process are provided in Table \ref{table:ablation}. When $\gamma=0$ (i.e., no regularization involved), the AFL stops working with $K=500$ and 1000 due to the ill-conditioned matrix scenario (e.g., $N_{k}<y_{\text{e}}$). Such an ill-conditioned case is avoided by introducing $\gamma$. However, the lack of RI process (see left columns in Table \ref{table:ablation}) could lead to accuracy loss. For instance, for $\gamma=100$, the AFL could suffer a loss of 9\% (i.e., $58.56\%\to49.62\%$). This is the result of regularization accumulation (see \eqref{eq_agg_r}). With the RI process, the AFL obtains an identical result across various $\gamma$ values. More importantly, this demonstrates that adopting the RI avoids the need to find proper $\gamma$ values. That is, the regularization is a removable intermediary, not a hyperparameter that requires tuning. 

\begin{table}[!h]
    \centering
	\caption{Ablation study of RI under various $\gamma$ and $K$. The left/right results are performance w/o and w/ the RI process in \eqref{eq_rf}.}\label{table:ablation}
 \resizebox{0.48\textwidth}{!}{
\footnotesize
  \begin{tabular}{lccccc} \toprule
		Acc.(\%)& $\gamma = 0 $& $\gamma = 0.1$& $\gamma = 1$& $\gamma = 10$& $\gamma = 100$ \\ \hline
		K=100  & 58.56 $|$ N/A& 58.54 $|$ 58.56& 58.51 $|$ 58.56& 58.15 $|$ 58.56& 55.77 $|$ 58.56 \\ \hline
        K=500  & ~1.11 $|$ N/A&58.52 $|$ 58.56& 58.30 $|$ 58.56& 56.72 $|$ 58.56 & 51.77 $|$ 58.56 \\ \hline
		K=1000  & ~0.75 $|$ N/A&58.51 $|$ 58.56& 58.15 $|$ 58.56& 55.77 $|$ 58.56 & 49.62 $|$ 58.56 \\ \bottomrule
	\end{tabular}}
\end{table}

\subsection{Validation with Different Backbones}
To explore the effect of different backbones used in the AFL, we extend the experiments with VGG11 \cite{VGG2015ICLR} and ViT-B-16 \cite{ViT2021ICLR}. All these backbone are pre-trained in ImageNet-1k and we conduct the experiments under the same setting in Section 4.2. Due to the invariance to data partitioning, we only report one result in single dataset. As shown in the Table \ref{tab:backbones}, with different pre-trained backbones, the AFL can all obtain competitive results. 

\begin{table}[h]
\footnotesize
\caption{The results of top-1 accuracy in \% of the AFL with different backbones including ResNet-18, VGG11 and ViT-B-16.}
    \centering
    \begin{tabular}{cccc}\toprule
        Acc.($\%$) & CIFAR-10& CIFAR-100& Tiny-ImageNet \\ \hline
        ResNet-18 & 80.75& 58.56& 54.67 \\ \hline
        VGG11 & 82.72& 60.43& 54.73 \\ \hline
        ViT-B-16 & \textbf{93.92}& \textbf{75.45}& \textbf{82.02} \\
         \bottomrule
    \end{tabular}
    
    \label{tab:backbones}
\end{table}
\section{Limitations and Future Work}
\textbf{Utilizing Pre-trained Backbone.} The AFL approach is both facilitated and constrained by the requirement of having a well-trained feature extractor. However, this limitation has been significantly mitigated by the emergence of reusing pre-trained models for new tasks. This ``pre-trained backbone + downstream task'' paradigm has become a standard practice in numerous deep learning domains, offering improved generalization and reduced computational costs. FL can further enhance this paradigm and we validate that collaboration can still be beneficial with pre-trained backbones in Supplementary Materials F. The proposal of the AFL aligns with these recent research trends, making it a sensible FL advancement. 

\textbf{Partially Participating and Stragglers.} The AFL formulates a single-round aggregation for FL systems, promoting rapid convergence and reducing communication overhead. However, challenges arise when clients engage partially or when stragglers impede progress. Since clients can only contribute to the aggregation after finishing local computations, the AFL needs to wait for all the clients. This potentially hampers the AFL's overall efficiency and inspires us to further refine the AFL to address these issues.

\textbf{Linear Assumptions of AFL.}  The AFL is established upon linear classifiers and may be less effective with non-linear data distribution. To address this, AFL can incorporate non-linear projections including non-linear activations or kernel functions. Also, for multi-layer model, AFL can formulate local least-square problem at each layer by label projection \cite{cpnet2021}. These techniques have been utilized in various AL-based work \cite{RAIL2024NeurIPS} and the AA law holds theoretically. We will conduct a further exploration in future.  

\section{Conclusion}
In this paper, we introduce a gradient-free FL framework named analytic federated learning (AFL). The AFL unveils analytical solutions both in the local client training stage and the aggregation stage. This leads to one-epoch local training, single-round aggregation, and fast convergence. In particular, the single-round aggregation property is theoretically supported and proved by the well-formulated AA law. Additionally, by introducing the RI process, we re-establish the AFL's optimality which could be compromised in the scenario of rank-deficient with normally a large number of clients. The AFL demonstrates its invariance to data partitioning, a property that allows several appealing FL characteristics such as data heterogeneity invariance and client-number invariance. These characteristics are empirically validated through experiments across various settings, where the AFL achieves a consistent and competitive performance.

\section*{Acknowledgment}
This research was supported by the National Natural Science Foundation of China (62306117, 62406114), the Guangzhou Basic and Applied Basic Research Foundation (2024A04J3681, 2023A04J1687), GJYC program of Guangzhou (2024D03J0005), the National Key R \& D Project from Minister of Science and Technology (2024YFA1211500), and the Fundamental Research Funds for the Central Universities (2024ZYGXZR074).
{
    \small
    \bibliographystyle{ieeenat_fullname}
    \bibliography{afl}
}


\clearpage
\setcounter{page}{1}
\maketitlesupplementary
\renewcommand\theequation{A.\arabic{equation}}
\renewcommand\thetable{A.\arabic{table}}
\setcounter{equation}{0}
\setcounter{table}{0}
\appendix
\renewcommand{\thesection}{\Alph{section}}
\section{Proof of Lemma 1}\label{app_corr1}
\begin{proof}
We prove this Lemma mainly based on the existing MP inverse partition result \cite{Cline_GIPM_JSIAM1964} as follows.	
	
In \cite{Cline_GIPM_JSIAM1964}, it has been demonstrated that the MP inverse of any matrix, $\Mat{\mathcal{A}} = \begin{bmatrix} \Mat{U} & \Mat{V} \end{bmatrix}$ can be written as
\begin{equation}\nonumber
\Mat{\mathcal{A}}^{\pinv} = \begin{bmatrix}
		\Mat{U} & \Mat{V}
	\end{bmatrix}^{\pinv} =
\end{equation}
\begin{equation}
 \label{eq_app_Cline1}
 \resizebox{0.45 \textwidth}{!}{
 $ \begin{bmatrix}
		\Mat{U}^{\pinv} - \Mat{U}^{\pinv}\Mat{V}\Mat{C}^{\pinv} - \Mat{U}^{\pinv}\Mat{V}(\Mat{I} - \Mat{C}^{\pinv}\Mat{C})\Mat{K}\Mat{V}^{\cT}\Mat{U}^{\pinv\cT}\Mat{U}^{\pinv}(\Mat{I} - \Mat{V}\Mat{C}^{\pinv}) \\
		\Mat{V}^{\pinv} - \Mat{V}^{\pinv}\Mat{U}\Mat{\tilde{C}}^{\pinv} - \Mat{V}^{\pinv}\Mat{U}(\Mat{I} - \Mat{\tilde{C}}^{\pinv}\Mat{\tilde{C}})\Mat{\tilde{K}}\Mat{U}^{\cT}\Mat{V}^{\pinv\cT}\Mat{V}^{\pinv}(\Mat{I} - \Mat{U}\Mat{\tilde{C}}^{\pinv})
	\end{bmatrix}$},
\end{equation}
where
\begin{equation*}\nonumber
	\begin{cases}
		\Mat{C}         = (\Mat{I} - \Mat{U}\Mat{U}^{\pinv})\Mat{V} \\
		\Mat{\tilde{C}} = (\Mat{I} - \Mat{V}\Mat{V}^{\pinv})\Mat{U} \\
	\end{cases}, 
\end{equation*}
 \begin{equation}\label{eq_app_Cline2}
	\begin{cases}
		\Mat{K}         = \left[ \Mat{I} + (\Mat{I} - \Mat{C}^{\pinv}\Mat{C})\Mat{V}^{\cT}\Mat{U}^{\pinv\cT}\Mat{U}^{\pinv}\Mat{V}(\Mat{I} - \Mat{C}^{\pinv}\Mat{C}) \right]^{-1} \\
		\Mat{\tilde{K}} = \left[ \Mat{I} + (\Mat{I} - \Mat{C}^{\pinv}\Mat{C})\Mat{U}^{\cT}\Mat{V}^{\pinv\cT}\Mat{V}^{\pinv}\Mat{U}(\Mat{I} - \Mat{C}^{\pinv}\Mat{C}) \right]^{-1} \\
	\end{cases}.
\end{equation}

In the case of $\Mat{X} = \begin{bmatrix} \Mat{X}_{u} \\ \Mat{X}_{v} \end{bmatrix}$ with only real numbers, we substitute $\Mat{U}$ with $\Mat{X}_{u}^{\T}$, $\Mat{V}$ with $\Mat{X}_{v}^{\T}$. This rewrites \eqref{eq_app_Cline1}, \eqref{eq_app_Cline2} into
\begin{equation*} \nonumber
	\Mat{X}^{\pinv} = \begin{bmatrix}
		\Mat{X}_{u} \\
		\Mat{X}_{v}
	\end{bmatrix}^{\pinv}  =
 \begin{bmatrix}
     \Mat{X}_{u}^{\pinv\T} - \Mat{X}_{u}^{\pinv\T}\Mat{X}_{v}^{\T}\Mat{C}^{\pinv} \\
     \Mat{X}_{v}^{\pinv\T} - \Mat{X}_{v}^{\pinv\T}\Mat{X}_{u}^{\T}\Mat{\tilde{C}}^{\pinv}
     \end{bmatrix}^{\T} - 
 \end{equation*}
 \begin{equation}
  \begin{bmatrix}
      \Mat{X}_{u}^{\pinv\T}\Mat{X}_{v}^{\T}(\Mat{I} - \Mat{C}^{\pinv}\Mat{C})\Mat{K}\Mat{X}_{v}\Mat{X}_{u}^{\pinv}\Mat{X}_{u}^{\pinv\T}(\Mat{I} - \Mat{X}_{v}^{\T}\Mat{C}^{\pinv})
		 \\
	 \Mat{X}_{v}^{\pinv\T}\Mat{X}_{u}^{\T}(\Mat{I} - \Mat{\tilde{C}}^{\pinv}\Mat{\tilde{C}})\Mat{\tilde{K}}\Mat{X}_{u}\Mat{X}_{v}^{\pinv}\Mat{X}_{v}^{\pinv\T}(\Mat{I} - \Mat{X}_{u}^{\T}\Mat{\tilde{C}}^{\pinv})
	\end{bmatrix}^{\T},
\end{equation}
where
\begin{equation*} \nonumber
	\begin{cases}
		\Mat{C}         = (\Mat{I} - \Mat{X}_{u}^{\T}\Mat{X}_{u}^{\pinv\T})\Mat{X}_{v}^{\T} \\
		\Mat{\tilde{C}} = (\Mat{I} - \Mat{X}_{v}^{\T}\Mat{X}_{v}^{\pinv\T})\Mat{X}_{u}^{\T} \\
	\end{cases},
\end{equation*}
\begin{equation}
\resizebox{0.45\textwidth}{!}{$
	\begin{cases}
		\Mat{K}         = \left[ \Mat{I} + (\Mat{I} - \Mat{C}^{\pinv}\Mat{C})\Mat{X}_{v}\Mat{X}_{u}^{\pinv}\Mat{X}_{u}^{\pinv\T}\Mat{X}_{v}^{\T}(\Mat{I} - \Mat{C}^{\pinv}\Mat{C}) \right]^{-1} \\
		\Mat{\tilde{K}} = \left[ \Mat{I} + (\Mat{I} - \Mat{C}^{\pinv}\Mat{C})\Mat{X}_{u}\Mat{X}_{v}^{\pinv}\Mat{X}_{v}^{\pinv\T}\Mat{X}_{u}^{\T}(\Mat{I} - \Mat{C}^{\pinv}\Mat{C}) \right]^{-1} \\
	\end{cases}.
 $}
\end{equation} 

As $\Mat{X}_{u}$ and $\Mat{X}_{v}$ are of full column ranks, we obtain an alternative formulation of the MP inverse, i.e., 
\begin{equation}
	\Mat{X}_{u}^{\pinv} = (\Mat{X}_{u}^{\T}\Mat{X}_{u})^{-1}\Mat{X}_{u}^{\T}, \quad \Mat{X}_{v}^{\pinv} = (\Mat{X}_{v}^{\T}\Mat{X}_{v})^{-1}\Mat{X}_{v}^{\T}.
\end{equation}
Hence we have
\begin{align} \nonumber
	\Mat{C} &= (\Mat{I} - \Mat{X}_{u}^{\T}\Mat{X}_{u}^{\pinv\T})\Mat{X}_{v}^{\T} \\ & = (\Mat{I} - \Mat{X}_{u}^{\T}\Mat{X}_{u}(\Mat{X}_{u}^{\T}\Mat{X}_{u})^{-1})\Mat{X}_{v}^{\T} = \Mat{0}.
\end{align}
Similarly, 
\begin{align}\nonumber
	\Mat{\tilde{C}} &= (\Mat{I} - \Mat{X}_{v}^{\T}\Mat{X}_{v}^{\pinv\T})\Mat{X}_{u}^{\T} \\ & = (\Mat{I} - \Mat{X}_{v}^{\T}\Mat{X}_{v}(\Mat{X}_{v}^{\T}\Mat{X}_{v})^{-1})\Mat{X}_{u}^{\T} = \Mat{0}.
\end{align}
This simplifies $\Mat{K}$ and $\Mat{\tilde{ K}}$ as
\begin{equation}
	\begin{cases}
		\Mat{K}         = (\Mat{I} + \Mat{X}_{v}\Mat{X}_{u}^{\pinv}\Mat{X}_{u}^{\pinv\T}\Mat{X}_{v}^{\T})^{-1} \\
		\Mat{\tilde{K}} = (\Mat{I} + \Mat{X}_{u}\Mat{X}_{v}^{\pinv}\Mat{X}_{v}^{\pinv\T}\Mat{X}_{u}^{\T})^{-1}
	\end{cases}.
\end{equation}
According to the Woodbury Matrix Identity, i.e., for conformable matrices $\Mat{A} \in \mathbb{R}^{n \times n}$, $\Mat{B} \in \mathbb{R}^{n \times m}$, $\Mat{E} \in \mathbb{R}^{m \times m}$, and $\Mat{D} \in \mathbb{R}^{m \times n}$, 
	\begin{equation}\label{eq_woodbury}
 \resizebox{0.45\textwidth}{!}{$
		(\Mat{A} + \Mat{B}\Mat{E}\Mat{D})^{-1} = \Mat{A}^{-1} - \Mat{A}^{-1}\Mat{B}(\Mat{E}^{-1} + \Mat{D}\Mat{A}^{-1}\Mat{B})^{-1}\Mat{D}\Mat{A}^{-1},
  $}
	\end{equation}
we expand $\Mat{K}$ by substituting $\Mat{A} = \Mat{I}$, $\Mat{B} = \Mat{X}_{v}$, $\Mat{E} = \Mat{X}_{u}^{\pinv}\Mat{X}_{u}^{\pinv\T}$, and $\Mat{D} = \Mat{X}_{v}^{\T}$, leading to
\begin{equation}\label{eq_app_K2}
	\Mat{K} = \Mat{I} - \Mat{X}_{v}(\Mat{X}_{u}^{\T}\Mat{X}_{u} + \Mat{X}_{v}^{\T}\Mat{X}_{v})^{-1}\Mat{X}_{v}^{\T}.
\end{equation}
Similarly,
\begin{equation}\label{eq_app_tK2}
	\Mat{\tilde{K}} = \Mat{I} - \Mat{X}_{u}(\Mat{X}_{u}^{\T}\Mat{X}_{u} + \Mat{X}_{v}^{\T}\Mat{X}_{v})^{-1}\Mat{X}_{u}^{\T}.
\end{equation}
Thus,
\begin{equation}
	\Mat{X}^{\pinv} = \begin{bmatrix}
		\Mat{X}_{u}^{\pinv\T} - \Mat{X}_{u}^{\pinv\T}\Mat{X}_{v}^{\T}\Mat{K}\Mat{X}_{v}\Mat{X}_{u}^{\pinv}\Mat{X}_{u}^{\pinv\T} \\
		\Mat{X}_{v}^{\pinv\T} - \Mat{X}_{v}^{\pinv\T}\Mat{X}_{u}^{\T}\Mat{\tilde{K}}\Mat{X}_{u}\Mat{X}_{v}^{\pinv}\Mat{X}_{v}^{\pinv\T}
	\end{bmatrix}^{\T}.
\end{equation}
Let $\Mat{X}^{\pinv} = \begin{bmatrix} \Mat{\bar{U}} & \Mat{\bar{V}} \end{bmatrix}$, we have
\begin{align} \label{eq_app_UV2} \nonumber
		\Mat{\bar{U}} &= \left(\Mat{X}_{u}^{\pinv\T} - \Mat{X}_{u}^{\pinv\T}\Mat{X}_{v}^{\T}\Mat{K}\Mat{X}_{v}\Mat{X}_{u}^{\pinv}\Mat{X}_{u}^{\pinv\T}\right)^{\T} 
		\\ \nonumber  &= \Mat{X}_{u}^{\pinv} - \Mat{X}_{u}^{\pinv}\Mat{X}_{u}^{\pinv\T}\Mat{X}_{v}^{\T}\Mat{K}^{\T}\Mat{X}_{v}\Mat{X}_{u}^{\pinv} \\ \nonumber
		\Mat{\bar{V}} &= \left(\Mat{X}_{v}^{\pinv\T} - \Mat{X}_{v}^{\pinv\T}\Mat{X}_{u}^{\T}\Mat{\tilde{K}}\Mat{X}_{u}\Mat{X}_{v}^{\pinv}\Mat{X}_{v}^{\pinv\T}\right)^{\T}
		\\ &= \Mat{X}_{v}^{\pinv} - \Mat{X}_{v}^{\pinv}\Mat{X}_{v}^{\pinv\T}\Mat{X}_{u}^{\T}\Mat{\tilde{K}}^{\T}\Mat{X}_{u}\Mat{X}_{v}^{\pinv}
\end{align}
Substitute $\Mat{K}$ and $\Mat{\tilde{K}}$ with \eqref{eq_app_K2} and \eqref{eq_app_tK2}, we may rewrite \eqref{eq_app_UV2} into
\begin{equation*}\nonumber
\resizebox{0.45 \textwidth}{!}{$
		\Mat{\bar{U}} = \Mat{X}_{u}^{\pinv} - \Mat{X}_{u}^{\pinv}\Mat{X}_{u}^{\pinv\T}\Mat{X}_{v}^{\T}\left(\Mat{I} + \Mat{X}_{v}(\Mat{X}_{u}^{\T}\Mat{X}_{u} + \Mat{X}_{v}^{\T}\Mat{X}_{v})^{-1}\Mat{X}_{v}^{\T}\right)^{\T}\Mat{X}_{v}\Mat{X}_{u}
  $},
\end{equation*}
\begin{equation}
\resizebox{0.45 \textwidth}{!}{$
		\Mat{\bar{V}} = \Mat{X}_{v}^{\pinv} - \Mat{X}_{v}^{\pinv}\Mat{X}_{v}^{\pinv\T}\Mat{X}_{u}^{\T}\left(\Mat{I} + \Mat{X}_{u}(\Mat{X}_{u}^{\T}\Mat{X}_{u} + \Mat{X}_{v}^{\T}\Mat{X}_{v})^{-1}\Mat{X}_{u}^{\T}\right)^{\T}\Mat{X}_{u}\Mat{X}_{v}^{\pinv}
$}.
\end{equation}
That is, 
\begin{align} \nonumber
		\Mat{\bar{U}} = &\Mat{X}_{u}^{\pinv} - \Mat{X}_{u}^{\pinv}\Mat{X}_{u}^{\pinv\T}\Mat{X}_{v}^{\T}\Mat{X}_{v}\Mat{X}_{u}^{\pinv}\\ \nonumber &+ \Mat{X}_{u}^{\pinv}\Mat{X}_{u}^{\pinv\T}\Mat{X}_{v}^{\T}\Mat{X}_{v}(\Mat{X}_{u}^{\T}\Mat{X}_{u} + \Mat{X}_{v}^{\T}\Mat{X}_{v})^{-1}\Mat{X}_{v}^{\T}\Mat{X}_{v}\Mat{X}_{u}^{\pinv} \\ \nonumber
		\Mat{\bar{V}} = &\Mat{X}_{v}^{\pinv} - \Mat{X}_{v}^{\pinv}\Mat{X}_{v}^{\pinv\T}\Mat{X}_{u}^{\T}\Mat{X}_{u}\Mat{X}_{v}^{\pinv} \\ &+ \Mat{X}_{v}^{\pinv}\Mat{X}_{v}^{\pinv\T}\Mat{X}_{u}^{\T}\Mat{X}_{u}(\Mat{X}_{u}^{\T}\Mat{X}_{u} + \Mat{X}_{v}^{\T}\Mat{X}_{v})^{-1}\Mat{X}_{u}^{\T}\Mat{X}_{u}\Mat{X}_{v}^{\pinv}.
\end{align}
Let
\begin{equation}
	\begin{cases}
			\Mat{C}_{u} = \Mat{X}_{u}^{\T}\Mat{X}_{u} \\
			\Mat{C}_{v} = \Mat{X}_{v}^{\T}\Mat{X}_{v}
		\end{cases}, \text{and}
            \begin{cases}
			\Mat{R}_{u} = \Mat{R}_{u}^{-1} \\
			\Mat{R}_{v} = \Mat{R}_{v}^{-1}
		\end{cases}.
\end{equation}
we have
\begin{equation}
	\begin{cases}
		\Mat{\bar{U}} = \left[\Mat{I} - \Mat{R}_{u}\Mat{C}_{v} + \Mat{R}_{u}\Mat{C}_{v}(\Mat{C}_{u} + \Mat{C}_{v})^{-1}\Mat{C}_{v}\right]\Mat{X}_{u}^{\pinv}  \\
		\Mat{\bar{V}} = \left[\Mat{I} - \Mat{R}_{v}\Mat{C}_{u} + \Mat{R}_{v}\Mat{C}_{u}(\Mat{C}_{u} + \Mat{C}_{v})^{-1}\Mat{C}_{u}\right]\Mat{X}_{v}^{\pinv}
	\end{cases}.
\end{equation}
Thus,
\begin{equation}
	\Mat{X}^{\pinv} = \begin{bmatrix}
		\Mat{X}_{u} \\
		\Mat{X}_{v}
	\end{bmatrix}^{\pinv} = \begin{bmatrix} \Mat{\bar{U}} & \Mat{\bar{V}} \end{bmatrix},
\end{equation}
which completes the proof.

\end{proof}
\section{Proof of Theorem 1}\label{app_thm1}
\begin{proof}
	As indicated in Lemma \ref{corollary_decomp}, we have 
	\begin{equation}
		\Mat{X}^{\pinv} = \begin{bmatrix} \Mat{\bar{U}} & \Mat{\bar{V}} \end{bmatrix}
	\end{equation}
	where
	\begin{equation}\label{eq_app_UV3}
		\begin{cases}
			\Mat{\bar{U}} = \left[\Mat{I} - \Mat{R}_{u}\Mat{C}_{v} + \Mat{R}_{u}\Mat{C}_{v}(\Mat{C}_{u} + \Mat{C}_{v})^{-1}\Mat{C}_{v}\right]\Mat{X}_{u}^{\pinv}  \\
			\Mat{\bar{V}} = \left[\Mat{I} - \Mat{R}_{v}\Mat{C}_{u} + \Mat{R}_{v}\Mat{C}_{u}(\Mat{C}_{u} + \Mat{C}_{v})^{-1}\Mat{C}_{u}\right]\Mat{X}_{v}^{\pinv}
		\end{cases},
	\end{equation}
	and
	\begin{equation*} \nonumber
		\begin{cases}
			\Mat{R}_{u} = (\Mat{X}_{u}^{\T}\Mat{X}_{u})^{-1} = \Mat{X}_{u}^{\pinv}\Mat{X}_{u}^{\pinv\T} \\
			\Mat{R}_{v} = (\Mat{X}_{v}^{\T}\Mat{X}_{v})^{-1} = \Mat{X}_{v}^{\pinv}\Mat{X}_{v}^{\pinv\T}
		\end{cases} 
  \end{equation*}
  \begin{equation}
		\begin{cases}
			\Mat{C}_{u} = \Mat{R}_{u}^{-1} = \Mat{X}_{u}^{\T}\Mat{X}_{u} \\
			\Mat{C}_{v} = \Mat{R}_{v}^{-1} = \Mat{X}_{v}^{\T}\Mat{X}_{v}
		\end{cases}.
	\end{equation}
	Hence, 
	\begin{align} \nonumber
		\Mat{W}&=\Mat{X}^{\pinv}\Mat{Y} = \begin{bmatrix} \Mat{\bar{U}} & \Mat{\bar{V}} \end{bmatrix} \begin{bmatrix}
			\Mat{Y}_{u} \\
			\Mat{Y}_{v}
		\end{bmatrix}\\\label{eq_app_UY1}
		&=\Mat{\bar{U}}\Mat{Y}_{u} + \Mat{\bar{V}}\Mat{Y}_{v}. 
	\end{align}
	By substituting $\Mat{\bar{U}}$ and $\Mat{\bar{V}}$ with those in \eqref{eq_app_UV3}, we rewrite \eqref{eq_app_UY1} into
		\begin{align}\label{eq_app_XY1}\nonumber
			\Mat{\hat{W}}
			=   &\left[\Mat{I}  - \Mat{R}_{u}\Mat{C}_{v} + \Mat{R}_{u}\Mat{C}_{v}(\Mat{C}_{u} + \Mat{C}_{v})^{-1}\Mat{C}_{v}\right]\Mat{X}_{u}^{\pinv}\Mat{Y}_{u}\\
			& + \left[\Mat{I} - \Mat{R}_{v}\Mat{C}_{u} + \Mat{R}_{v}\Mat{C}_{u}(\Mat{C}_{u} + \Mat{C}_{v})^{-1}\Mat{C}_{u}\right]\Mat{X}_{v}^{\pinv}\Mat{Y}_{v}.
		\end{align}
As $\Mat{\hat{W}}_{u} = \Mat{X}_{u}^{\pinv}\Mat{Y}_{u}$ and $\Mat{\hat{W}}_{v} = \Mat{X}_{v}^{\pinv}\Mat{Y}_{v}$, \eqref{eq_app_XY1} can be rewritten as
\begin{align}\nonumber
	\Mat{\hat{W}}	= & \left[\Mat{I} - \Mat{R}_{u}\Mat{C}_{v} + \Mat{R}_{u}\Mat{C}_{v}(\Mat{C}_{u} + \Mat{C}_{v})^{-1}\Mat{C}_{v}\right]\Mat{\hat{W}}_{u} \\
	&+ \left[\Mat{I} - \Mat{R}_{v}\Mat{C}_{u} + \Mat{R}_{v}\Mat{C}_{u}(\Mat{C}_{u} + \Mat{C}_{v})^{-1}\Mat{C}_{u}\right]\Mat{\hat{W}}_{v}.
\end{align}
That is,
\begin{align}
	\Mat{\hat{W}}= \Mat{\mathcal{W}}_{u}\Mat{\hat{W}}_{u} + \Mat{\mathcal{W}}_{v}\Mat{\hat{W}}_{v}&,
\end{align}
where
\begin{equation*}\nonumber
	\begin{cases}
		\Mat{\mathcal{W}}_{u} = \Mat{I} - \Mat{R}_{u}\Mat{C}_{v} + \Mat{R}_{u}\Mat{C}_{v}(\Mat{C}_{u} + \Mat{C}_{v})^{-1}\Mat{C}_{v}\\
		\Mat{\mathcal{W}}_{v} = \Mat{I} - \Mat{R}_{v}\Mat{C}_{u} + \Mat{R}_{v}\Mat{C}_{u}(\Mat{C}_{u} + \Mat{C}_{v})^{-1}\Mat{C}_{u}
	\end{cases},
 \end{equation*}
 \begin{equation}
	\begin{cases}
            \Mat{C}_{u} = \Mat{X}_{u}^{\T}\Mat{X}_{u}\\
            \Mat{C}_{v} = \Mat{X}_{v}^{\T}\Mat{X}_{v}
        \end{cases}
	\text{and} \quad
	\begin{cases}
            \Mat{R}_{u} = \Mat{C}_{u}^{-1}\\
            \Mat{R}_{v} = \Mat{C}_{v}^{-1}
        \end{cases}.
\end{equation}
\end{proof}

\section{Proof of Theorem 2}\label{app_thm2}
\begin{proof}
    First we consider the aggregation of two clients. Directly substituting $\hat{\Mat{W}}_{u}$ as 
 $\hat{\Mat{W}}_{u}^{\textup{r}}$ and changing $\Mat{C}_{u}$, $\Mat{C}_{v}$ to $\Mat{C}_{u}^{\textup{r}} = (\Mat{X}_{u}^{\T}\Mat{X}_{u} + \gamma \Mat{I})$, $\Mat{C}_{v}^{\textup{r}} = (\Mat{X}_{v}^{\T}\Mat{X}_{v} + \gamma \Mat{I})$ in Theorem \ref{app_thm1}, we have 
\begin{align} \label{eq_thm1_r}
	\Mat{\hat{W}}^{\textup{r}}= \Mat{\mathcal{W}}_{u}^{\textup{r}}\Mat{\hat{W}}_{u}^{\textup{r}} + \Mat{\mathcal{W}}_{v}^{\textup{r}}\Mat{\hat{W}}_{v}^{\textup{r}},
\end{align}
where 
 \begin{equation*}\nonumber
	\begin{cases}
		\Mat{\mathcal{W}}_{u}^{\textup{r}} = \Mat{I} - \Mat{R}_{u}^{\textup{r}}\Mat{C}_{v}^{\textup{r}} + \Mat{R}_{u}^{\textup{r}}\Mat{C}_{v}^{\textup{r}}(\Mat{C}_{u}^{\textup{r}} + \Mat{C}_{v}^{\textup{r}})^{-1}\Mat{C}_{v}^{\textup{r}}\\
		\Mat{\mathcal{W}}_{v}^{\textup{r}} = \Mat{I} - \Mat{R}_{v}^{\textup{r}}\Mat{C}_{u}^{\textup{r}} + \Mat{R}_{v}^{\textup{r}}\Mat{C}_{u}^{\textup{r}}(\Mat{C}_{u}^{\textup{r}} + \Mat{C}_{v}^{\textup{r}})^{-1}\Mat{C}_{u}^{\textup{r}}
	\end{cases},
 \end{equation*}
  \begin{equation}
	\begin{cases}
		\Mat{C}_{u}^{\textup{r}} = (\Mat{X}_{u}^{\T}\Mat{X}_{u} + \gamma \Mat{I})\\
		\Mat{C}_{v}^{\textup{r}} = (\Mat{X}_{v}^{\T}\Mat{X}_{v} + \gamma \Mat{I})
	\end{cases}\text{and} 
	\begin{cases}
		\Mat{R}_{u}^{\textup{r}} = \Mat{C}_{u}^{\textup{r} -1}\\
		\Mat{R}_{v}^{\textup{r}} = \Mat{C}_{v}^{\textup{r} -1}
	\end{cases}.
 \end{equation}
Since $\Mat{\hat{W}}_{u}^{\textup{r}} = (\Mat{X}_{u}^{\T}\Mat{X}_{u} + \gamma \Mat{I})^{-1}\Mat{X}_{u}^{\T}\Mat{Y}_{u}$, then 
\begin{equation*}\nonumber
\resizebox{0.43\textwidth}{!}{$
     \Mat{\mathcal{W}}_{u}^{\textup{r}}\Mat{\hat{W}}_{u}^{\textup{r}}= [\Mat{I} - \Mat{R}_{u}^{\textup{r}}\Mat{C}_{v}^{\textup{r}} + \Mat{R}_{u}^{\textup{r}}\Mat{C}_{v}^{\textup{r}}(\Mat{C}_{u}^{\textup{r}} + \Mat{C}_{v}^{\textup{r}})^{-1}\Mat{C}_{v}^{\textup{r}}] \Mat{R}_{u}^{\textup{r}} \Mat{X}_{u}^{\T}\Mat{Y}_{u} $}
\end{equation*}
\begin{equation}\label{eq_agg_u_1}
\resizebox{0.43\textwidth}{!}{$
     = [\Mat{R}_{u}^{\textup{r}} - \Mat{R}_{u}^{\textup{r}}\Mat{C}_{v}^{\textup{r}}\Mat{R}_{u}^{\textup{r}} + \Mat{R}_{u}^{\textup{r}}\Mat{C}_{v}^{\textup{r}}(\Mat{C}_{u}^{\textup{r}} + \Mat{C}_{v}^{\textup{r}})^{-1}\Mat{C}_{v}^{\textup{r}}\Mat{R}_{u}^{\textup{r}}]  \Mat{X}_{u}^{\T}\Mat{Y}_{u}.$}
\end{equation}

According to the Woodbury Matrix Identity in \eqref{eq_woodbury}, let $\Mat{B} = \Mat{I}, \Mat{D} = \Mat{I}$, we have 
\begin{equation}
    (\Mat{A} + \Mat{E})^{-1} = \Mat{A}^{-1} - \Mat{A}^{-1}(\Mat{A}^{-1}+\Mat{E}^{-1})^{-1}\Mat{A}^{-1}.
\end{equation}
 Then we have
\begin{equation}
    \Mat{A}^{-1}(\Mat{A}^{-1}+\Mat{E}^{-1})^{-1}\Mat{A}^{-1} = \Mat{A}^{-1} - (\Mat{A} + \Mat{E})^{-1} .
\end{equation}
Swapping $\Mat{A}^{-1}$ with $\Mat{C}_{v}^{\textup{r}}$ and $\Mat{E}^{-1}$ with $\Mat{C}_{u}^{\textup{r}}$, we have 
  \begin{equation}\label{eq_sub_C}
    \Mat{C}_{v}^{\textup{r}}(\Mat{C}_{v}^{\textup{r}}+\Mat{C}_{u}^{\textup{r}})^{-1}\Mat{C}_{v}^{\textup{r}} = \Mat{C}_{v}^{\textup{r}} - (\Mat{R}_{u}^{\textup{r}} + \Mat{R}_{v}^{\textup{r}})^{-1} .
\end{equation}
Similarly, 
\begin{equation}\label{eq_sub_R}
    \Mat{R}_{u}^{\textup{r}}(\Mat{R}_{u}^{\textup{r}}+\Mat{R}_{v}^{\textup{r}})^{-1}\Mat{R}_{r}^{\textup{r}} = \Mat{R}_{u}^{\textup{r}} - (\Mat{C}_{u}^{\textup{r}} + \Mat{C}_{v}^{\textup{r}})^{-1} .
\end{equation}
By substituting \eqref{eq_sub_C} into \eqref{eq_agg_u_1}, 
\begin{equation} \label{eq_agg_u_2}
\resizebox{0.48 \textwidth}{!}{$
\Mat{\mathcal{W}}_{u}^{\textup{r}}\Mat{\hat{W}}_{u}^{\textup{r}} = [\Mat{R}_{u}^{\textup{r}} - \Mat{R}_{u}^{\textup{r}}\Mat{C}_{v}^{\textup{r}}\Mat{R}_{u}^{\textup{r}} + \Mat{R}_{u}^{\textup{r}}\Mat{C}_{v}^{\textup{r}}\Mat{R}_{u}^{\textup{r}} - \Mat{R}_{u}^{\textup{r}}(\Mat{R}_{u}^{\textup{r}} + \Mat{R}_{v}^{\textup{r}})^{-1}\Mat{R}_{u}^{\textup{r}}] \Mat{X}_{u}^{\T}\Mat{Y}_{u}.
$}
\end{equation}
Then 
\begin{equation}
 \Mat{\mathcal{W}}_{u}^{\textup{r}}\Mat{\hat{W}}_{u}^{\textup{r}}= [\Mat{R}_{u}^{\textup{r}} - \Mat{R}_{u}^{\textup{r}}(\Mat{R}_{u}^{\textup{r}} + \Mat{R}_{v}^{\textup{r}})^{-1}\Mat{R}_{u}^{\textup{r}}] \Mat{X}_{u}^{\T}\Mat{Y}_{u} .
\label{eq_agg_u_3}
\end{equation}
Further substituting \eqref{eq_sub_R} into \eqref{eq_agg_u_3}, we have 
\begin{align}\nonumber
\Mat{\mathcal{W}}_{u}^{\textup{r}}\Mat{\hat{W}}_{u}^{\textup{r}}  &= [\Mat{R}_{u}^{\textup{r}} - \Mat{R}_{u}^{\textup{r}}(\Mat{R}_{u}^{\textup{r}} + \Mat{R}_{v}^{\textup{r}})^{-1}\Mat{R}_{u}^{\textup{r}}] \Mat{X}_{u}^{\T}\Mat{Y}_{u} 
\\\nonumber &= [\Mat{R}_{u}^{\textup{r}} - \Mat{R}_{u}^{\textup{r}} + (\Mat{C}_{u}^{\textup{r}} + \Mat{C}_{v}^{\textup{r}})^{-1}] \Mat{X}_{u}^{\T}\Mat{Y}_{u} 
\\\nonumber &= (\Mat{C}_{u}^{\textup{r}} + \Mat{C}_{v}^{\textup{r}})^{-1} \Mat{X}_{u}^{\T}\Mat{Y}_{u}
\\ &= (\Mat{C}_{u} + \Mat{C}_{v} + 2 \gamma \Mat{I})^{-1} \Mat{X}_{u}^{\T}\Mat{Y}_{u}.
\label{eq_agg_u_4}
\end{align}
Similarly, 
\begin{equation}
    \Mat{\mathcal{W}}_{v}^{\textup{r}}\Mat{\hat{W}}_{v}^{\textup{r}} = (\Mat{C}_{u} + \Mat{C}_{v} + 2 \gamma \Mat{I})^{-1} \Mat{X}_{v}^{\T}\Mat{Y}_{v}.
\end{equation}
Thus equation \eqref{eq_thm1_r} can be converted to 
\begin{align} 
	\Mat{\hat{W}}^{\textup{r}}= (\Mat{C}_{u} + \Mat{C}_{v} + 2 \gamma \Mat{I})^{-1} (\Mat{X}_{u}^{\T}\Mat{Y}_{u}+\Mat{X}_{v}^{\T}\Mat{Y}_{v}).
\end{align}
Since $\Mat{\hat{W}} = \Mat{X}^{\pinv} \Mat{Y}$ and $\Mat{X} = \begin{bmatrix} \Mat{X}_{u} \\ \Mat{X}_{v} \end{bmatrix}$, $\Mat{Y} = \begin{bmatrix} \Mat{Y}_{u} \\ \Mat{Y}_{v} \end{bmatrix}$, with full-column rank of $\Mat{X}$, 
\begin{align} \label{eq_thm1_C}
    \Mat{\hat{W}} &= \Mat{X}^{\pinv}\Mat{Y} = (\Mat{X}^{\textup{T}}\Mat{X})^{-1}\Mat{X}^{\textup{T}}Y 
\\\nonumber &= (\begin{bmatrix} \Mat{X}_{u}^{\textup{T}} \quad \Mat{X}_{v}^{\textup{T}} \end{bmatrix}\begin{bmatrix} \Mat{X}_{u} \\ \Mat{X}_{v} \end{bmatrix})^{-1}\begin{bmatrix} \Mat{X}_{u}^{\textup{T}} \quad \Mat{X}_{v}^{\textup{T}} \end{bmatrix} \begin{bmatrix} \Mat{Y}_{u} \\ \Mat{Y}_{v} \end{bmatrix})
\\\nonumber &= (\Mat{X}_{u}^{\T}\Mat{X}_{u}+\Mat{X}_{v}^{\T}\Mat{X}_{v})^{-1}(\Mat{X}_{u}^{\T}\Mat{Y}_{u}+\Mat{X}_{v}^{\T}\Mat{Y}_{v})
\\\nonumber &= (\Mat{C}_{u}+\Mat{C}_{v})^{-1} (\Mat{X}_{u}^{\T}\Mat{Y}_{u}+\Mat{X}_{v}^{\T}\Mat{Y}_{v}).
\end{align}
By comparing with \eqref{eq_thm1_r} and \eqref{eq_thm1_C}, we can obtain the relation between $\Mat{\hat{W}}$ and $\Mat{\hat{W}}^{\textup{r}}$ as follows.

\begin{equation}
    \Mat{\hat{W}}^{\textup{r}} = (\Mat{C}_{u}^{\textup r} + \Mat{C}_{v}^{\textup r})^{-1} (\Mat{C}_{u} + \Mat{C}_{v}) \Mat{\hat{W}}.
\end{equation}

By extending to the multi-client scenario, we have 
\begin{equation}
\hat{\Mat{W}}_{\textup{agg},k}^{\text{r}} = (\Mat{C}_{\textup{agg},k}^{\textup{r}})^{-1} \Mat{C}_{\textup{agg},k} \hat{\Mat{W}}_{\textup{agg},k},
	\end{equation}
	where
	\begin{equation*}
\Mat{C}_{\textup{agg},k}^{\textup{r}} = \Mat{C}_{\textup{agg},k} + k\gamma \Mat{I} = \sum_{i}^{k} \Mat{C}_{i}^{\textup{r}},
	\end{equation*}
 \begin{equation}
 \Mat{C}_{i}^{\textup{r}} = \Mat{X}_{i}^{\T}\Mat{X}_{i} +\gamma \Mat{I},
 \end{equation}
 which complete the proof.
\end{proof}


\begin{table*}[t!]
	\caption{Deviation $\Delta\Mat{W}$ between the joint-trained weight and the aggregated one (average of 5 runs).
 }
	\begin{tabular}{lcccccc}
		\toprule
		Difference & $K=2$& $K=10$ & $K=20$&$K=50$&$K=100$ & $K=200$\\ \hline   
		$\Delta \Mat{W}$ (w/o RI) & $7.83\times10^{-14}$& $1.76\times10^{-12}$ & $9.86\times10^{-1}$& $5.90$& $5.93\times10^{4}$ & $3.67\times10^{12}$\\ \hline      
		$\Delta \Mat{W}$ (w/ RI) &$4.94\times10^{-14}$& $1.74\times10^{-12}$& $5.09\times10^{-10}$& $8.45\times10^{-10}$& $7.57\times10^{-10}$ &  $7.81\times10^{-10}$ \\ \bottomrule
	\end{tabular}
 \centering  
	\label{table:dummy}
\end{table*}

\section{Validating AA Laws on Dummy Dataset} \label{Dummytest}
Here we validate the AA laws in AFL, whether the aggregated weight $\Mat{\hat{W}}_{\text{agg},K}$ equals to $\Mat{\hat{W}}$ trained on a centralized dataset. This is done by measuring the deviation (i.e., $\Delta\Mat{W} = \lVert \Mat{\hat{W}} - \Mat{\hat{W}}_{\text{agg},K} \rVert_{1}$) between the joint-trained weight and the aggregated one on a dummy dataset.

\textbf{Dummy Dataset.} We randomly generate a 512-dimension and 10,000-sample dummy dataset. This dataset has 10 classes, with each class containing an identical number of samples. The samples in the dummy dataset are randomly but evenly distributed to $K$ clients (we set $K=2, 10, 20, 50 , 100, 200$).

\textbf{Results.} As indicated in Table \ref{table:dummy}, without the RI process, the deviation is negligible for $K=2,10$, but it grows with an increasing $K$ and could become unacceptable (e.g., $3.67\times10^{12}$ for $K=200$). This is because the full-column rank assumption might not hold anymore for large $K$. By adopting the RI process, the deviations become negligible (around $10^{-10}$) for various $K$ values as shown in the second row of Table \ref{table:dummy}. The RI process introduces $\gamma$ (we adopt $\gamma=1$ in this case, but any value would suffice) to satisfy full-column rank condition, which is later removed in \eqref{eq_rf} to restore the AA law's optimality. This experiment has well demonstrated AFL's invariance to data partitioning with empirical evidence. The codes for the dummy data validation can be found in the file $App\_Dummy.ipynb$ in the released code.

\section{Implementation Details of Experiments}\label{app_compared}
For the compared methods, we set the local epoch to 1 and all the clients are selected to participate each round after local training. The batch size is set to 64 and we employ SGD optimizer with learning rate of 0.05. The number of global communication rounds is set to be 500 since there is little or no performance gain with more rounds. We report the average and standard deviation of best top-1 accuracy in three runs. All the experiments are conducted on a NVIDIA GeForce RTX 4090 GPU. All the compared methods except FedDisco are implemented with PFLlib \cite{PFLlib2023arxiv} and FedDisco is implemented upon FedAvg via official codes.

For the specified hyperparameters in compared methods, we tune the parameters via grid search. For FedProx \cite{FedProx2020MLSYS}, we tune the hyperparameter $\mu$ from $\{0.0001, 0.001, 0.01, 0.1\}$. For MOON \cite{MOON2021CVPR}, we tune the hyperparameter $\mu$ from $\{0.1, 1, 5, 10\}$. For FedDyn \cite{FedDyn2021ICLR}, we tune the hyperparameter $\alpha$ from $\{0.001, 0.01, 0.1, 1.0\}$. For FedNTD \cite{FedNTD2022NeurIPS}, we tune the hyperparameters $\tau$ from $\{0.1, 0.5, 1.0\}$ and $\beta$ from $\{0.1, 0.5, 1.0, 2.0\}$. For FedDisco \cite{FedDisco2023ICML}, we tune the hyperparameters $a$ from $\{0.01, 0.05, 0.1, 0.5\}$ and $b$ from $\{0.005, 0.01, 0.05, 0.1\}$. The best parameters we adopted are, $\mu=0.001$ in FedProx \cite{FedProx2020MLSYS}, $\mu=1$ in MOON \cite{MOON2021CVPR},   $\alpha = 1.0$ in FedDyn \cite{FedDyn2021ICLR}, $\tau = 0.5, \beta=1.0$ in FedNTD \cite{FedNTD2022NeurIPS} and $a = 0.05, b = 0.01$ in FedDisco \cite{FedDisco2023ICML}. 

\section{Necessity of FL with Pre-trained Model}
To validate the necessity of FL with pre-trained backbone, we train the models locally without aggregation under the setting of $\alpha$=0.1, K=100. We report the average and maximum test accuracy of local training among all the clients. As shown in Table. \ref{table:r3}, without aggregation, the results of local training (12.04\% and 16.36\%) fall behind FedAvg (56.57\%) and AFL (58.56\%) with large gaps. Training local models without FL could suffer from the data heterogeneity and the collaboration among clients is beneficial. This pattern is also validated in previous studies with pre-trained backbone \cite{FedPR2023CVPR}. 

\begin{table}[!h]
	\captionof{table}{Comparison between FL teniques including FedAvg and AFL with local training when utilizing pre-trained backbone.}
    \begin{tabular}{lcccc} \toprule
		Methods& Local Max & Local Avg & FedAvg & AFL  \\ \hline
		Acc.(\%)  & 16.36 & 12.04 & 56.57& \textbf{58.56} \\ \bottomrule
	\end{tabular}
 \centering
	\label{table:r3}
\end{table}

\section{Comparative Study with Single-Round FL}
In this section, we provide a comparative study with another single-round FL technique FedFisher \cite{FedFisher2024AISTAT} to further validate our proposed AFL. We compare AFL with FedFisher under the setting of $\alpha=0.1, K=50$ (larger K will lead to out-of-memory in FedFisher) with the same pre-trained ResNet-18 provided by the repository of FedFisher. As shown in Table \ref{table:r1}, AFL outperforms FedFisher with considerable distance (35.87\% v.s. 19.31\%). FedFisher utilizes iterative gradient-descent to aggregate local weights and MSE loss is established by the difference of global and local weights to preventing drifting between them during aggregation. However, this technique could still suffer from the data heterogeneity, while AFL formulates the AA law to achieve the invariance to data partitioning, enabling outperforming result when compared with FedFisher.

\begin{table}[!h]
	\captionof{table}{Comparative study between AFL and  FedFisher.}
    \begin{tabular}{lcc} \toprule
		Methods& FedFisher & AFL  \\ \hline
		Acc.(\%)  & 19.31 & 35.87 \\ \bottomrule
	\end{tabular}
 \centering
	\label{table:r1}
\end{table}




\end{document}